\newcommand{\allgather}{\texttt{AllGather}\xspace}
\newcommand{\graytext}[1]{\textcolor{gray}{#1}}
\DeclareMathOperator{\vect}{vec}
\DeclareMathOperator{\diag}{diag}
\DeclareMathOperator{\blkdiag}{blkdiag}
\DeclareMathOperator{\matdiag}{matdiag}
\newcommand*{\horzbar}{\rule[.5ex]{2.5ex}{0.5pt}}
\begin{document}

\title{A Distributed Data-Parallel PyTorch Implementation of the Distributed Shampoo Optimizer for Training Neural Networks At-Scale}

\author{Hao-Jun Michael Shi}
\authornote{Both authors contributed to the original implementation of this work.}
\email{hjmshi@meta.com}
\affiliation{%
  \institution{Meta Platforms, Inc.}
  \streetaddress{1 Hacker Way}
  \city{Menlo Park}
  \state{California}
  \country{USA}
}

\author{Tsung-Hsien Lee}
\authornotemark[1]
\affiliation{%
  \institution{Independent Researcher}
  \country{USA}
  }
\email{tsung.hsien.lee@gmail.com}

\author{Shintaro Iwasaki}
\authornote{Contributed to the primary distributed performance optimization implementation in this work.}
\email{siwasaki@meta.com}
\affiliation{%
  \institution{Meta Platforms, Inc.}
  \streetaddress{1 Hacker Way}
  \city{Menlo Park}
  \state{California}
  \country{USA}
}

\author{Jose Gallego-Posada}
\authornote{Work was performed while a visiting researcher at Meta Platforms, Inc. Performed experimental ablations.}
\affiliation{
  \institution{Mila \& University of Montreal}
  \streetaddress{6666 Rue Saint-Urbain}
  \city{Montreal}
  \state{Quebec}
  \country{Canada}}
\email{josegp@meta.com}

\author{Zhijing Li}
\email{zhijing@meta.com}
\affiliation{%
  \institution{Meta Platforms, Inc.}
  \streetaddress{1 Hacker Way}
  \city{Menlo Park}
  \state{California}
  \country{USA}
}

\author{Kaushik Rangadurai}
\email{krangadu@meta.com}
\affiliation{%
  \institution{Meta Platforms, Inc.}
  \streetaddress{1 Hacker Way}
  \city{Menlo Park}
  \state{California}
  \country{USA}
}

\author{Dheevatsa Mudigere}
\authornote{Work was performed while at Meta Platforms, Inc.}
\email{dheevatsa@nvidia.com}
\affiliation{%
  \institution{NVIDIA Corporation}
  \streetaddress{2788 San Tomas Expressway}
  \city{Santa Clara}
  \state{California}
  \country{USA}
}

\author{Michael Rabbat}
\email{mikerabbat@meta.com}
\affiliation{%
  \institution{Meta Platforms, Inc.}
  \streetaddress{1 Hacker Way}
  \city{Menlo Park}
  \state{California}
  \country{USA}
}

\renewcommand{\shortauthors}{Shi et al.}

\begin{abstract}
    Shampoo is an online and stochastic optimization algorithm belonging to the AdaGrad family of methods for training neural networks. It constructs a block-diagonal preconditioner where each block consists of a coarse Kronecker product approximation to full-matrix AdaGrad for each parameter of the neural network. In this work, we provide a complete description of the algorithm as well as the performance optimizations that our implementation leverages to train deep networks at-scale in PyTorch. Our implementation enables fast multi-GPU distributed data-parallel training by distributing the memory and computation associated with blocks of each parameter via PyTorch's \texttt{DTensor} data structure and performing an \texttt{AllGather} primitive on the computed search directions at each iteration. This major performance enhancement enables us to achieve at most a 10\% performance reduction in per-step wall-clock time compared against standard diagonal-scaling-based adaptive gradient methods. We validate our implementation by performing an ablation study on training ImageNet ResNet50, demonstrating Shampoo's superiority against standard training recipes with minimal hyperparameter tuning. \\
    
    \noindent Our code is available at \href{https://github.com/facebookresearch/optimizers/tree/main/distributed_shampoo}{\texttt{github.com/facebookresearch/optimizers/tree/main/distributed\_shampoo}}.
\end{abstract}



\keywords{stochastic optimization, online convex optimization, training algorithms, deep learning, neural networks, PyTorch}

\maketitle

\section{Introduction}

Adaptive gradient methods (Adam(W), AdaGrad, RMSProp) have been widely adopted as the de-facto methods for training neural networks across a range of applications, including computer vision, natural language processing, and ranking and recommendation \cite{zhang2022opt, naumov2019deep, dosovitskiy2020image}. Originally motivated by the need for per-feature, sparsity-aware learning rates \cite{duchi2011adaptive}, these methods have proven to be especially useful because their hyperparameters are easier to tune with faster convergence in some cases.

The most widely-used versions of adaptive gradient methods involve per-coordinate scaling, which is equivalent to applying a diagonal preconditioner to the stochastic gradient. When training large models typical of deep learning applications, which can have millions to billions of variables, it is tractable to store and apply optimizer states of this order. For example, the optimizers (diagonal) AdaGrad, RMSProp, and Adam(W) all make use of auxiliary states that combined are 2--3 times the size of the model. The auxiliary state tracks either the sum or an exponentially-weighted moving average of functions of each component of the gradient (e.g., the square of the component, or the component's value itself).

On the other hand, it is known that there exists a version of AdaGrad where the preconditioner is a dense full matrix, and this full-matrix version offers stronger theoretical convergence guarantees than diagonal AdaGrad~\cite{duchi2011adaptive}. Its state tracks the sum of the outer product of the stochastic gradient with itself. Consequently, the size of the full-matrix AdaGrad state is quadratic in the model size. Furthermore, the method requires inverting the preconditioner matrix, and so the computional cost is cubic in the model size. Its high memory and computational costs renders full-matrix AdaGrad impractical for deep learning.

The Shampoo algorithm \cite{gupta2018shampoo, anil2021scalable} is an adaptive gradient method for training deep neural networks that fills the gap between diagonal and full-matrix preconditioning by applying two approximations. First, it restricts to block-diagonal preconditioners, where each block preconditions a single layer. Second, Shampoo leverages the special structure of neural network gradients to form a Kronecker product approximation of each preconditioner block, further reducing the memory footprint. Combined, these approximations reduce the cost of Shampoo to approximately 4--7 times the model size, which makes Shampoo feasible for training networks at scale. Whereas diagonal adaptive gradient methods fail to capture any cross-parameter correlations, Shampoo captures some of the correlations within each block. This has led to demonstrable improvements in convergence over previous methods, and has enabled Shampoo's productionization for real-world use-cases, such as in Google's ads recommendations systems \cite{anil2022factory}.

It is worth noting that, although Shampoo involves preconditioning the (stochastic) gradient, the motivation behind Shampoo and full-matrix AdaGrad is distinct from second-order Newton-type methods. Newton-based methods approximate a smooth function locally using Taylor expansions to achieve fast local convergence near a minimizer. On the other hand, adaptive gradient methods like AdaGrad are motivated by the design of preconditioners to maximally decrease the distance to the solution after a fixed number of iterations, specifically for convex non-smooth functions. Furthermore, in machine learning applications, there is a greater emphasis on the initial behavior of the training dynamics, as opposed to other applications of nonlinear programming, which place greater importance on obtaining high-accuracy solutions and fast local convergence~\cite{bottou2018optimization}.

The contribution of this paper is the description and design of a PyTorch implementation of the Distributed Shampoo algorithm. It is designed specifically for distributed data-parallel training using PyTorch's \texttt{DistributedDataParallel} module, where each worker only computes a local subset of gradients (called the \textit{local batch}), and the \textit{global} mini-batch gradient is aggregated across workers. Unlike the JAX implementation, which is optimized for heterogeneous TPU/CPU architectures \cite{anil2021distributed}, the PyTorch implementation is optimized for homogeneous GPU architectures. 

Under standard data parallelism, the cost of the optimizer step is assumed to be marginal relative to the forward and backward passes on the network, and therefore the computation is  replicated across all workers. Indeed, these optimizers' inherent simplicity (implemented through element-wise operations) have enabled highly performant (arguably, ideal) implementations via horizontal and vertical fusion; see NVIDIA's APEX optimizers as an example \cite{nvidia2019apex}. 

Instead, because Shampoo significantly increases the total amount of FLOPs-per-iteration by replacing element-wise operations with matrix operations, Shampoo requires a different set of performance optimizations in order to remain competitive with standard diagonal adaptive gradient methods in terms of wall-clock time. Rather than replicating the optimizer state and computation across all workers, as with standard diagonal scaling methods, our implementation distributes the overall memory and compute of each Shampoo update, only requiring each worker to compute a subset of the search directions (with respect to each parameter) based on a pre-determined greedy assignment, similar to the ZeRO-1 optimizer in \cite{rajbhandari2020zero}. After each worker completes their portion of the work, the search directions for each parameter are collected across all workers; see Figure \ref{fig:all gather}. This enables a performant implementation of Shampoo that is practically applicable for large-scale deep learning training by incurring at most a 10\% increase in wall-clock time per-iteration relative to diagonal adaptive gradient methods.

For machine learning engineers and scientists, this performant implementation offers two potential measurable benefits: (1) faster convergence (in terms of number of iterations and wall-clock time) to a model of the same quality; and/or (2) a nontrivial improvement of the model quality after a fixed number of iterations, with additional training costs but no increase in inference and serving costs. 

\begin{figure}
    \centering
    \includegraphics[width=0.6\textwidth]{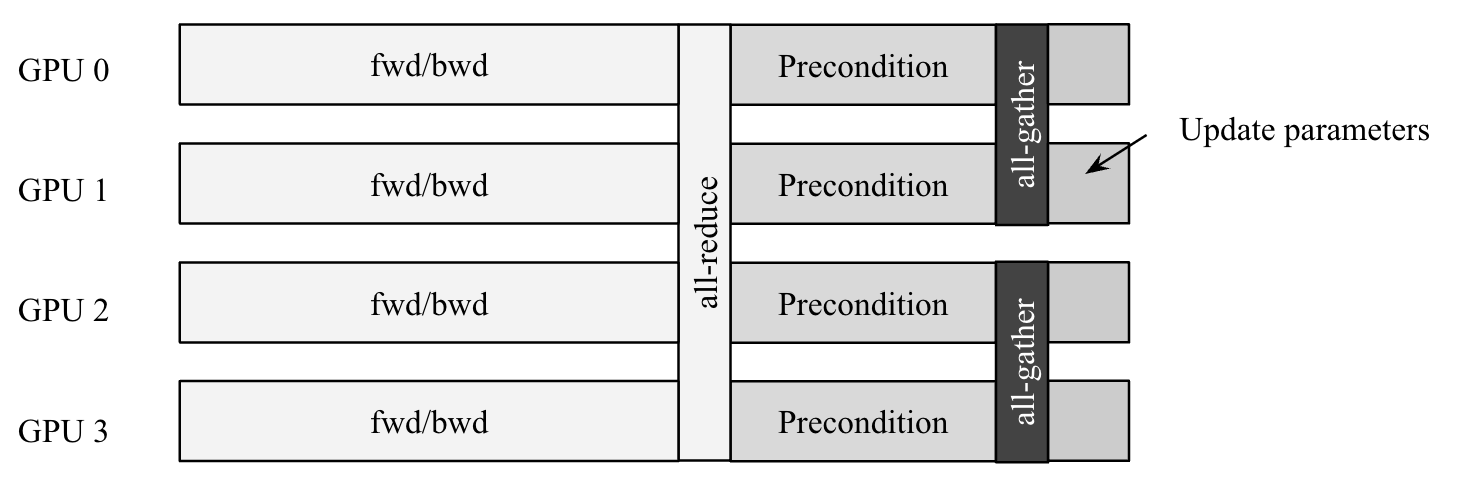}
    \caption{Outline of each distributed data-parallel iteration with the Distributed Shampoo optimizer.}
    \label{fig:all gather}
\end{figure}

\subsection{Main Contributions}

The main contributions of this paper are three-fold:
\begin{enumerate}
    \item We provide a complete characterization of the Distributed Shampoo algorithm, including learning rate grafting as well as important deep learning heuristics (exponential moving averages, momentum, weight decay, etc.) necessary to make Shampoo work well in practice. These are incorporated into our PyTorch Shampoo implementation. Where possible, we provide interpretations of those heuristics; see Sections \ref{sec:overview} and \ref{sec:implementation}.
    \item We describe the main performance optimizations that enable the PyTorch Distributed Shampoo implementation to be competitive with standard diagonal adaptive gradient methods in terms of wall-clock time. This will enable Distributed Shampoo to converge faster than diagonal adaptive gradient methods in terms of wall-clock time (by taking fewer steps than diagonal methods) or achieve better model quality with marginal increases in training time (after the same number of steps); see Section \ref{sec:performance}.
    \item We provide corroborating evidence for Distributed Shampoo's improvement in convergence and model quality by providing ablations and numerical results on ImageNet ResNet50 with standard benchmark training recipes; see Section \ref{sec:numerical}. Specifically, Shampoo over 60 epochs is able to achieve the same validation accuracy as SGD with Nesterov over 90 epochs with minimal hyperparameter tuning. This yields a 1.35x improvement in overall wall-clock time when training.
\end{enumerate}
Our implementation is available online, and the open-source repository includes a \hyperlink{https://github.com/facebookresearch/optimizers/blob/main/distributed_shampoo/README.md}{\texttt{README}} and user guide which complement the discussion in this paper. For details, see: \\
\href{https://github.com/facebookresearch/optimizers/tree/main/distributed_shampoo}{\texttt{https://github.com/facebookresearch/optimizers/tree/main/distributed\_shampoo}}.

\subsection{Terminology and Notation}

For a vectors or matrices $A, B \in \mathbb{R}^{m \times n}$, we define the element-wise square operator $A^{\odot 2} \in \mathbb{R}^{m \times n}$, division operator $A / B \in \mathbb{R}^{m \times n}$, and square-root operator $\sqrt{A} \in \mathbb{R}^{m \times n}$ element-wise, i.e., $[A^{\odot 2}]_{ij} = A_{ij}^2$, $[A / B]_{ij} = A_{ij} / B_{ij}$, and $\left[\sqrt{A} \right]_{ij} = \sqrt{A_{ij}}$. This is in contrast to $A^p$, which denotes the matrix $p$-th power of $A$. We will use square brackets to denote $[n] = \{1, ..., n\}$ for $n \in \mathbb{N}$. We let $I_m$ denote the $m$-dimensional identity matrix, $1_m = (1, 1, ..., 1)^T \in \mathbb{R}^m$ denote the ones vector of length $m$, and $0_{m \times n}$ denote a $m \times n$-dimensional zeros matrix.

We define the $\diag: \mathbb{R}^n \rightarrow \mathbb{R}^{n \times n}$ operator as the function that forms a diagonal matrix with the input vector's entries on the diagonal, i.e., if $a = (a_1, ..., a_n)^T \in \mathbb{R}^n$, then
\begin{equation}
    \diag(a) = \begin{pmatrix} a_{11} & 0 & ... & 0 \\ 0 & a_{22} & ... & 0 \\ \vdots & \vdots & \ddots & \vdots \\ 0 & 0 & ... & a_{nn} \end{pmatrix}
    \in \mathbb{R}^{n \times n}.
\end{equation}
For $n_1, n_2, ..., n_l \in \mathbb{N}$, we define $\blkdiag: \mathbb{R}^{n_1 \times n_1} \times ... \times \mathbb{R}^{n_l \times n_l} \rightarrow \mathbb{R}^{(n_1 + ... + n_l) \times (n_1 + ... + n_l)}$ as the operator that forms a block diagonal matrix from square matrices, i.e., if $A_i \in \mathbb{R}^{n_i \times n_i}$ for $i = 1, ..., l$, then: 
\begin{equation}
    \blkdiag(A_1, ..., A_l) = \begin{pmatrix} 
    A_1 & 0 & ... & 0 \\ 
    0 & A_2 & ... & 0 \\ 
    \vdots & \vdots & \ddots & \vdots \\ 0 & 0 & ... & A_l
    \end{pmatrix}
    \in \mathbb{R}^{(n_1 + ... + n_l) \times (n_1 + ... + n_l)}.
\end{equation}
We define a matrix diagonal operator $\matdiag: \mathbb{R}^{n \times n} \rightarrow \mathbb{R}^{n \times n}$ as the operator that returns a matrix of the same shape but only with its diagonal entries and zero elsewhere, i.e., given $A \in \mathbb{R}^{n \times n}$, then:
\begin{equation}
    \matdiag(A) = A \odot I_n =
    \begin{pmatrix}
    a_{11} & 0 & ... & 0 \\ 
    0 & a_{22} & ... & 0 \\ 
    \vdots & \vdots & \ddots & \vdots \\ 0 & 0 & ... & a_{nn}
    \end{pmatrix}
    \in \mathbb{R}^{n \times n},
\end{equation}
where $\odot$ corresponds to element-wise multiplication of two matrices of the same shape. Vectorization of matrices is performed in a row-wise fashion, i.e., if 
\begin{equation}
    A = \left( \begin{array}{ccc} 
    \horzbar & a_1^T & \horzbar \\ 
    \horzbar & a_2^T & \horzbar \\ 
    & \vdots & \\ 
    \horzbar & a_m^T & \horzbar 
    \end{array} \right)
\end{equation}
then $\vect(A) = (\horzbar a_1^T \horzbar, \horzbar a_2^T \horzbar, ..., \horzbar a_m^T \horzbar)^T$.

For matrices $A \in \mathbb{R}^{m \times n}$ and $B \in \mathbb{R}^{q \times r}$, their \textit{Kronecker product} is defined as
\begin{equation}
A \otimes B = \begin{pmatrix} a_{11} B & a_{12} B & ... & a_{1n} B \\ a_{21} B & a_{22} B & ... & a_{2n} B \\ \vdots & \vdots & \ddots & \vdots \\ a_{m1} B & a_{m2} B & ... & a_{mn} B  \end{pmatrix} \in \mathbb{R}^{mq \times nr}.
\end{equation}
There are a few nice properties of Kronecker products and their relationship with row vectorization that we exploit, namely,
\begin{itemize}
    \item If both $A$ and $B$ are square symmetric positive semi-definite matrices, then $(A \otimes B)^p = A^p \otimes B^p$ for $p \geq 0$. If $A$ and $B$ are symmetric positive definite, then this holds for all $p \in \mathbb{R}$.
    \item If $A$ and $B$ are square matrices and $G \in \mathbb{R}^{m \times q}$ is an $m \times q$ matrix, then $\vect(A G B) = (A \otimes B^T) \vect(G)$.
\end{itemize}
We will call $A$ and $B$ the \textit{Kronecker factor matrices}. 

\section{Problem Statement and Shampoo Algorithm}
\label{sec:overview}

\subsection{Neural Network Training}
The neural network training problem can be posed as a stochastic optimization problem of the form:
\begin{equation}
    \min_{w \in \mathbb{R}^d} \left\{ f(w) = \mathbb{E}_{(x, y) \sim \mathcal{D}}[\ell(m(w; x); y)] \right\}
\end{equation}
where $(x, y) \in \mathbb{R}^{d_0} \times \mathbb{R}^{d_n}$ correspond to a feature vector-label pair, $\mathcal{D}$ corresponds to the underlying data distribution, $m: \mathbb{R}^{d} \times \mathbb{R}^{d_0} \rightarrow \mathbb{R}^{d_n}$ represents a neural network model that takes as input the model parameters $w$ and feature vector $x$ and outputs a prediction in $\mathbb{R}^{d_n}$. The loss function $\ell: \mathbb{R}^{d_n} \times \mathbb{R}^{d_n} \rightarrow \mathbb{R}$ measures how well the model's prediction matches the target label $y$. The model is parameterized by a list of tensors $W^{(1)}, ..., W^{(n)}$ with $\vect(W^{(1)}) \in \mathbb{R}^{d^{(1)}}, ..., \vect(W^{(n)}) \in \mathbb{R}^{d^{(n)}}$. Each tensor $W^{(i)}$ will be called a \textit{parameter}, consistent with PyTorch's terminology for the enumerable representation of the tensor list passed into \texttt{torch.optim.Optimizer}. The full list of tensors will be called the model's \textit{parameters}.

We will denote the concatenated vectorized parameters as $w = (\vect(W^{(1)})^T, ..., \vect(W^{(n)})^T)^T \in \mathbb{R}^d$ with $d = \sum_{i = 1}^n d^{(i)}$. Using this language, we say that our network has \textit{$n$ parameters}, but \textit{$d$ variables} or \textit{weights}. We will refer to the learning rate, momentum parameter, etc. as \textit{hyperparameters} to avoid overloading the term parameter.

A simple example of a neural network model is a multi-layer perceptron consisting of linear layers (ignoring the bias terms) of the form:
\begin{equation}
    m(w; x) = W^{(n)} \sigma( W^{(n - 1)} \sigma( ... \sigma( W^{(1)} x) ... ) ),
\end{equation}
where $W^{(i)} \in \mathbb{R}^{d_i \times d_{i - 1}}$ is a parameter, $w = (\vect(W^{(1)})^T, ..., \vect(W^{(n)})^T)^T \in \mathbb{R}^{d}$ with $d^{(i)} = d_i d_{i - 1}$ and $d = \sum_{i = 1}^n d^{(i)} = \sum_{i = 1}^n d_i d_{i - 1}$ is the vector of all parameters of dimension $d$, and $\sigma$ is a componentwise activation function, i.e., $[\sigma(x)]_j = \sigma(x_j)$. For example, a ReLU activation function is defined as $\sigma(x) = \max(x, 0)$. Consistent with the parameter shapes, we will denote $G^{(i)} = \nabla_{W^{(i)}} \ell(m(w; x), y) \in \mathbb{R}^{d_i \times d_{i - 1}}$ as the (mini-batch) stochastic gradient of parameter $i$ and $g = (\vect(G^{(1)})^T, ..., \vect(G^{(n)})^T)^T \in \mathbb{R}^d$ as the (mini-batch) stochastic gradient vector.\footnote{If we use the mini-batch stochastic gradient, then given a global mini-batch size $B$, we would sample a mini-batch of samples $\{(x_i, y_i)\}_{i = 1}^B$ and the mini-batch stochastic gradient would be defined as $G^{(i)} = \frac{1}{B} \sum_{i = 1}^B \nabla_{W^{(i)}} \ell(m(w; x_i), y_i) \in \mathbb{R}^{d_i \times d_{i - 1}}$.} Here, $d^{(i)}$ corresponds to the number of variables within parameter $i$, $d_i$ corresponds to the dimension of the activation after layer or parameter $i$, and $d$ corresponds to the total number of variables in the optimization problem.

Closely related to the stochastic optimization formulation is the online convex optimization problem. These formulations have been shown to be equivalent under certain scenarios~\cite{cesa2004generalization}. The online optimization problem has relevance to settings where online training on streaming data is used in practice, often to fine-tune models. This problem may be formulated as a game where at round $t$, a player makes a prediction $w_t \in \mathbb{R}^d$, receives a loss evaluated at the predicted point $f_t(w_t)$ (and its gradient $\nabla f_t(w_t)$), and updates their prediction for the next round $w_{t + 1}$. The functions must belong to a predetermined bounded function class $f_t \in \mathcal{F}$, but, unlike in the stochastic optimization setting, are not assumed to arise from some underlying probability distribution. This setting can therefore model settings where the underlying data distribution may shift during training, as in ranking and recommendation 
\cite{naumov2019deep}. 

\subsection{Diagonal Adaptive Gradient Methods}
The family of adaptive gradient methods \cite{duchi2011adaptive, kingma2014adam, dozat2016incorporating, reddi2019convergence} is designed for both the stochastic optimization and online convex optimization. The AdaGrad method preconditions the (sub)gradient by the pseudo-inverse square-root of the sum of squared gradients or gradient outer products, i.e., 
\begin{equation}
\label{eq:adagrad}
    w_{t + 1} = w_t - \alpha_t A_t^{\dagger/2} g_t
\end{equation}
where $g_t \in \mathbb{R}^d$ is the vectorized (mini-batch) stochastic gradient, $\alpha_t > 0$ is the learning rate or steplength, and
\begin{equation}
    A_t = 
    \begin{cases}
    \sum_{s = 0}^t g_s g_s^T & \text{(Full-Matrix AdaGrad)} \\
    \sum_{s = 0}^t \diag(g_s^{\odot 2}) = \sum_{s = 0}^t \matdiag(g_s g_s^T) & \text{(Diagonal AdaGrad)}
    \end{cases}
\end{equation}
for $\epsilon > 0$. In this case, $p_t = A_t^{\dagger/2} g_t$ is the adaptive gradient \textit{search direction}. Note that full-matrix AdaGrad requires $O(d^2)$ memory and $O(d^3)$ computation, while diagonal AdaGrad requires $O(d)$ memory and $O(d)$ computation. Related methods like RMSProp and Adam use exponential moving averages in place of the summation, i.e.,
\begin{equation}
    A_t = \beta_2 A_{t - 1} + (1 - \beta_2) \diag(g_t^{\odot 2}),
\end{equation}
with $A_{-1} = 0$ and may incorporate a bias correction term. Since $d$ is commonly on the order of billions or even trillions of parameters, full-matrix AdaGrad is not practically feasible, and its diagonal approximation is commonly applied instead. In the diagonal case, AdaGrad's optimizer state is instantiated with the same shapes as the neural network parameters $A_t = \diag((\vect(A_t^{(1)})^T, ..., \vect(A_t^{(n)})^T)^T)$ with $\dim(A_t^{(i)}) = \dim(G_t^{(i)})$ for $i = 1, ..., n$, and the algorithm update is implemented in a per-parameter fashion, i.e.,
\begin{equation}
    W_{t + 1}^{(i)} = W_t^{(i)} - \alpha_t G_t^{(i)} / \sqrt{A_t^{(i)}}, ~~~~~~ \forall i = 1, ..., n,
\end{equation}
where division $\cdot / \cdot$ and square-root operators $\sqrt{\cdot}$ are applied componentwise. 

Observe that $A_t$ is symmetric positive semi-definite for both full-matrix and diagonal AdaGrad. Since these methods can only guarantee symmetric positive semi-definiteness, a small regularization term $\epsilon I$ is inserted into the preconditioner to ensure positive-definiteness, either by computing:
\begin{equation}\label{eq:Adagrad}
    w_{t + 1} = w_t - \alpha_t (A_t + \epsilon I)^{-1/2} g_t
\end{equation}
or
\begin{equation}
    w_{t + 1} = w_t - \alpha_t (A_t^{1/2} + \epsilon I)^{-1} g_t.
\end{equation}
Although the latter is more common for (diagonal) AdaGrad, RMSProp, and Adam, we will use the former for Shampoo. 

Since $A_t$ is real symmetric positive semi-definite, the pseudo-inverse square root is defined in terms of its real eigendecomposition 
$A_t = Q_t 
\begin{bmatrix} 
\Lambda_t & 0_{d \times (d - r)} \\
0_{(d - r) \times d} & 0_{(d - r) \times (d - r)}
\end{bmatrix}
Q_t^T$ where $\Lambda_t \in \mathbb{R}^{m \times m}$ for $r \leq d$ is a diagonal matrix consisting of the positive eigenvalues of $A_t$ and $Q_t \in \mathbb{R}^{d \times d}$ is an orthogonal matrix. Note that $r$ is the rank of $A_t$. The matrix pseudo-inverse square root is therefore defined as 
$A_t^{\dagger/2} = Q_t 
\begin{bmatrix} 
\Lambda_t^{-1 / 2} & 0_{d \times (d - r)} \\
0_{(d - r) \times d} & 0_{(d - r) \times (d - r)}
\end{bmatrix}
Q_t^T$
where $\Lambda_t^{-1 / 2}$ is the inverse square root of the diagonal entries in the matrix~\cite{higham2008functions, golub2013matrix}. 

Note that this is \emph{not} equal to the element-wise root inverse, which we denote as $1/\sqrt{\cdot}$. However, when applied to diagonal AdaGrad (with regularization), it is sufficient to take the inverse square root of each diagonal component since $A_t$ is already diagonalized.

\subsection{The Shampoo Algorithm}

Although diagonal AdaGrad is efficient for training, it ignores (uncentered) correlations, and yields a worse constant in its regret bound and convergence rate \cite{duchi2011adaptive}. Full-matrix AdaGrad incorporates these correlations to obtain a better search direction at each step. On the other hand, full-matrix AdaGrad is not tractable due to its quadratic memory and cubic computation requirements. Shampoo provides a scalable solution in between these two regimes by applying two approximations:
\begin{enumerate}
    \item Block-Diagonal Approximation: Rather than constructing a single matrix preconditioner for all parameters simultaneously, Shampoo exploits the parameter list representation of the neural network and constructs a block-diagonal preconditioner where each block preconditions each individual parameter independently. Note that this implies that cross-parameter correlations are ignored.
    \item Kronecker Product Approximation: In order to exploit the underlying tensor structure of each parameter, full-matrix AdaGrad is replaced with a Kronecker product approximation to capture (uncentered) correlations.
\end{enumerate}
For simplicity, let us focus on the multi-layer perceptron case where each parameter consists of a matrix $W^{(i)} \in \mathbb{R}^{d_i \times d_{i - 1}}$ and focus solely on a single parameter $W^{(i)}$. Note that the gradient $G^{(i)} = \nabla_{W^{(i)}} f(w) \in \mathbb{R}^{d_i \times d_{i - 1}}$ shares the same shape as the parameter.

\begin{figure}
    \centering
    \includegraphics[width=0.5\textwidth]{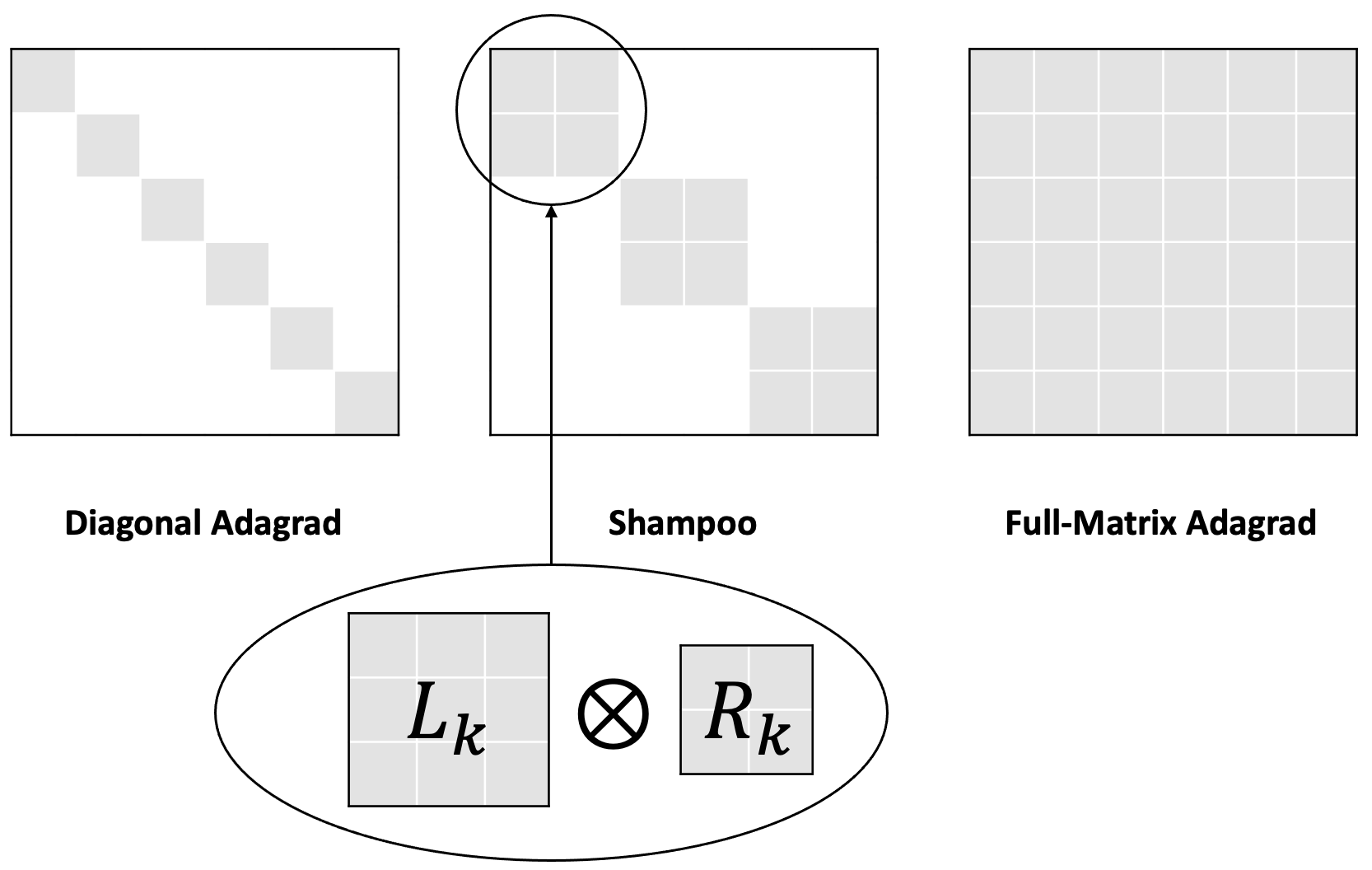}
    \caption{Picture of block-diagonal and Kronecker product approximations used in Shampoo.}
    \label{fig:shampoo approx}
\end{figure}

The gradient of a fully-connected layer for a single data point can be written as the outer product of the pre-activation gradient and the activation before layer $i$. More precisely, we can isolate a single fully-connected layer as the only parameter in the objective function with all other parameters fixed, i.e., $f^{(i)}(W^{(i)}) = \phi^{(i)}(W^{(i)} a^{(i - 1)})$, where $\phi^{(i)}: \mathbb{R}^{d_{i}} \rightarrow \mathbb{R}$ is composed of the loss function and the rest of the model and $a^{(i - 1)}$ is the activation before layer $i$; see Appendix \ref{app: kronecker products} for their precise definition for multi-layer perceptrons. Then the gradient can be written as $G^{(i)} = \nabla_{W^{(i)}} f^{(i)}(W^{(i)}) = \nabla \phi^{(i)}(z)|_{z = W^{(i)} a^{(i - 1)}} (a^{(i - 1)})^T$, and its row vectorization is $g = \vect(G^{i}) = \nabla \phi^{(i)}(z)|_{z = W^{(i)} a^{(i - 1)}} \otimes a^{(i - 1)}$. 

Let the subscript $s$ denote the gradient, function, or activation at iteration $s$. Then full-matrix AdaGrad for layer $i$ accumulates a summation of Kronecker products:
\begin{align*}
    A_t^{(i)} & = \sum_{s = 0}^t g_s g_s^T \\
    & = \sum_{s = 0}^t (\nabla \phi^{(i)}_s(z)|_{z = W^{(i)}_s a^{(i - 1)}_s} \otimes a^{(i - 1)}_s) (\nabla \phi^{(i)}_s(z)|_{z = W^{(i)}_s a^{(i - 1)}_s} \otimes a^{(i - 1)}_s)^T \\
    & = \sum_{s = 0}^t (\nabla \phi^{(i)}_s(z)|_{z = W^{(i)}_s a^{(i - 1)}_s} (\nabla \phi^{(i)}_s(z)|_{z = W^{(i)}_s a^{(i - 1)}_s})^T) \otimes (a^{(i - 1)}_s (a^{(i - 1)}_s)^T).
\end{align*}
We aim to approximate $A_t^{(i)}$ by a single Kronecker product of two \textit{factor matrices} $L_t^{(i)} \in \mathbb{R}^{d_i \times d_i}$ and $R_t^{(i)} \in \mathbb{R}^{d_{i - 1} \times d_{i - 1}}$ such that $A_t^{(i)} \approx L_t^{(i)} \otimes R_t^{(i)}$. Rather than vectorizing the gradient, these matrices will operate directly on the tensor (matrix in the fully-connected case) $G_t^{(i)}$. More specifically, $L_t^{(i)}, R_t^{(i)}$ are defined as:
\begin{align}
    L_t^{(i)} & = \sum_{s = 0}^t G_s^{(i)} [G_s^{(i)}]^T + \epsilon I_{d_i}, \\
    R_t^{(i)} & = \sum_{s = 0}^t [G_s^{(i)}]^T G_s^{(i)} + \epsilon I_{d_{i - 1}},
\end{align}
and its Kronecker product is defined as
\begin{equation} \label{eq:Shampoo preconditioner}
    \bar{A}_t^{(i)} = [L_t^{(i)}]^{1/2} \otimes [R_t^{(i)}]^{1/2}
\end{equation}
for all $i = 1, ..., n$. Since both $L_t^{(i)}$ and $R_t^{(i)}$ are symmetric by definition, the transpose can be ignored. Therefore, using the fact that $\vect(L G R^T) = (L \otimes R) \vect(G)$ for arbitrary matrices $L, G, R$ of appropriate shape with equations \eqref{eq:Adagrad} and \eqref{eq:Shampoo preconditioner}, the Shampoo update can be written as:
\begin{equation} \label{eq:shampoo update}
    W_{t + 1}^{(i)} = W_t^{(i)} - \alpha_t [L_t^{(i)}]^{-1/4} G_t^{(i)} [R_t^{(i)}]^{-1/4} ~~~~~~~ \text{for} ~~~ i = 1, ..., n.
\end{equation}
Notice that full-matrix AdaGrad for parameter $W^{(i)}$ costs $O(d_i^2 d_{i - 1}^2)$ memory and $O(d_i^3 d_{i - 1}^3)$ FLOPs-per-iteration. By utilizing this approximation, the memory footprint can be reduced to $O(d_i^2 + d_{i - 1}^2)$ and the amount of computation to $O(d_i^3 + d_{i - 1}^3)$ FLOPs-per-iteration. 

If the update is expanded across all vectorized parameter weights $w$, the full Shampoo update can be rewritten as:
\begin{equation}
    w_{t + 1} = w_t - \alpha_t \bar{A}_t^{-1/2} g_t
\end{equation}
where $\bar{A}_t$ is a block diagonal matrix of the form
\begin{equation}
\begin{aligned}
    \bar{A}_t & = \blkdiag(\bar{A}_t^{(1)}, ..., \bar{A}_t^{(n)}) \\
    & = \blkdiag([L_t^{(1)}]^{1/2} \otimes [R_t^{(1)}]^{1/2}, ..., [L_t^{(n)}]^{1/2} \otimes [R_t^{(n)}]^{1/2}) \\
    & = 
    \begin{bmatrix} [L_t^{(1)}]^{1/2} \otimes [R_{t}^{(1)}]^{1/2} & 0 & ... & 0 \\ 0 & [L_{t}^{(2)}]^{1/2} \otimes [R_{t}^{(2)}]^{1/2} & ... & 0 \\ 0 & 0 & \ddots & 0 \\ 0 & 0 & ... & [L_{t}^{(n)}]^{1/2} \otimes [R_{t}^{(n)}]^{1/2} \end{bmatrix}.
\end{aligned}
\end{equation}
Shampoo generalizes these ideas to models containing parameters of arbitrary tensor order and dimension; see Section 4 in \cite{gupta2018shampoo}.

\subsection{Layer-wise Learning Rate Grafting}

One major requirement to make Shampoo work in practice is the inclusion of \textit{layer-wise learning rate grafting}. Learning rate grafting was introduced in \cite{agarwal2020disentangling} in order to transfer a pre-existing learning rate schedule from a previous method. The idea is to maintain the search direction from another method (called the \textit{grafted method}) and re-scale each layer's Shampoo search direction to the norm of the search direction of the grafted method. Preconditioners for both Shampoo and the grafting method are updated based on the same sequence of iterates. 

From a global perspective, grafting can be re-interpreted as a heuristic block re-scaling. Let $P_{t, \text{Shampoo}}^{(i)}$ denote the Shampoo search direction for block $i$ and iteration $t$.\footnote{If we are operating on a matrix, then $P_{t, \text{Shampoo}}^{(i)} := [L_t^{(i)}]^{-1/4} G_t^{(i)} [R_t^{(i)}]^{-1/4}$, as seen in Section \ref{sec:overview}.} Given a separate grafted method $P_{t, \text{graft}}^{(i)}$, learning rate grafting modifies the Shampoo step to:
\begin{equation}
    W_{t + 1}^{(i)} = W_t^{(i)} - \alpha_t \left\|P_{t, \text{graft}}^{(i)} \right\|_F \frac{P_{t, \text{Shampoo}}^{(i)}}{\left\|P_{t, \text{Shampoo}}^{(i)} \right\|_F}, ~~~~~ \forall i = 1, ..., n.
\end{equation}
Note that $P_{t, \text{graft}}^{(i)}$ is defined based on the iterate sequence from Shampoo $w_t$, not a separate sequence of iterates. We can therefore re-write the full update as
\begin{equation}
    w_{t + 1} = w_t - \alpha_t D_t \bar{A}_t^{-1/2} g_t
\end{equation}
where
\begin{align}
    D_t & = \blkdiag \left(\frac{\|P_{t, \text{graft}}^{(1)}\|_F}{\|P_{t, \text{Shampoo}}^{(1)}\|_F} I_{d^{(1)}}, ..., \frac{\|P_{t, \text{graft}}^{(n)}\|_F}{\|P_{t, \text{Shampoo}}^{(n)}\|_F} I_{d^{(n)}} \right)\\
    & =
    \begin{bmatrix} 
    \frac{\|P_{t, \text{graft}}^{(1)}\|_F}{\|P_{t, \text{Shampoo}}^{(1)}\|_F} I_{d^{(1)}} & 0 & ... & 0 \\ 0 & \frac{\|P_{t, \text{graft}}^{(2)}\|_F}{\|P_{t, \text{Shampoo}}^{(2)}\|_F} I_{d^{(2)}} & ... & 0 \\ \vdots & \vdots & \ddots & \vdots \\ 0 & 0 & ... & \frac{\|P_{t, \text{graft}}^{(n)}\|_F}{\|P_{t, \text{Shampoo}}^{(n)}\|_F} I_{d^{(n)}}
    \end{bmatrix}.
\end{align}

Our implementation supports all standard diagonal-scaling-based optimizers in PyTorch, including AdaGrad, RMSProp, Adam(W), and SGD. In the case of AdaGrad, RMSProp, and Adam, we implement layerwise learning rate grafting by maintaining the diagonal preconditioner for our grafted method $\tilde{A}_t = \blkdiag(\diag(\vect(\tilde{A}_t^{(1)})), ..., \diag(\vect(\tilde{A}_t^{(n)})))$ where $\dim(\tilde{A}_t^{(i)}) = \dim(G_t^{(i)})$ for $i = 1, ..., n$. For example, if we are grafting from AdaGrad, the grafted preconditioner is defined as $\tilde{A}_t^{(i)} = \sum_{s = 0}^t [G_s^{(i)}]^{\odot 2} = \sum_{s = 0}^t [\nabla_{W^{(i)}} f_t(w_s)]^{\odot 2}$. Note that the preconditioner is updated using the stochastic gradients \textit{evaluated at the same sequence of iterates generated and used by Shampoo}; we use $\tilde{A}$ to distinguish this key difference between standard diagonal AdaGrad and the grafted method. In the case of AdaGrad, RMSProp, and Adam grafting, the grafted search direction is defined as $P_{t, \text{graft}}^{(i)} = G_t^{(i)} / ([\tilde{A}_t^{(i)}]^{1/2} + \epsilon 1_{d_i} 1_{d_{i - 1}}^T)$ for parameter $i$, where $\tilde{A}_t^{(i)}$ is the AdaGrad, RMSProp, or Adam second-moment estimate. 

This heuristic makes Shampoo significantly easier to tune given a pre-existing learning rate scheduler. By grafting from the previous optimizer's learning rate schedule, one generally sees immediate improvements in convergence with little additional hyperparameter tuning. This can be used as an easy baseline for further fine-tuning of the optimizer. For more details, please refer to \cite{agarwal2020disentangling, anil2021scalable}.

A high-level pseudocode for the Shampoo algorithm (with standard accumulation of the factor matrices and AdaGrad learning rate grafting) is provided in Algorithm \ref{alg: basic shampoo}.

\begin{algorithm}
    \caption{Shampoo Pseudocode (with AdaGrad Grafting) for Training MLPs}
    \label{alg: basic shampoo}
    \begin{algorithmic}
        \Require Parameters $\{W^{(i)}\}_{i = 1}^n$ with $W_0^{(i)} \equiv W^{(i)} \in \mathbb{R}^{d_i \times d_{i - 1}}$, learning rate schedule $\{\alpha_t\}_{t = 1}^T$ with $\alpha_t > 0$, epsilon for Shampoo $\epsilon > 0$, epsilon for AdaGrad $\epsilon_{\text{graft}} > 0$, maximum number of iterations $T$
        \State
        \For{$i = 1, ..., n$}
        \State Set $L_{-1}^{(i)} = \epsilon I_{d_i} \in \mathbb{R}^{d_i \times d_i}$, $R_{-1}^{(i)} = \epsilon I_{d_{i - 1}} \in \mathbb{R}^{d_{i - 1} \times d_{i - 1}}$. \Comment{Initialize Shampoo states.}
        \State Set $A_{-1}^{(i)} = 0 \in \mathbb{R}^{d_i \times d_{i - 1}}$. \Comment{Initialize AdaGrad state.}
        \EndFor
        \State
        \For{$t = 0, 1, 2, ..., T - 1$}
        \For{$i = 1, ..., n$}
        \State Compute (mini-batch) stochastic gradient $G_t^{(i)} = \nabla_{W^{(i)}} f_t(w) \in \mathbb{R}^{d_i \times d_{i - 1}}$ for parameter $i$.
        \State Update Shampoo factor matrices:
        \begin{align*}
            L_t^{(i)} & \leftarrow L_{t - 1}^{(i)} + G_t^{(i)} [G_t^{(i)}]^T, \\
            R_t^{(i)} & \leftarrow R_{t - 1}^{(i)} + [G_t^{(i)}]^T G_t^{(i)}
        \end{align*}
        \State Update AdaGrad state:
        $$A_t^{(i)} \leftarrow A_{t - 1}^{(i)} + [G_t^{(i)}]^{\odot 2}$$
        \State Compute matrix root inverses: 
        \begin{align*}
            \bar{L}_t^{(i)} & \leftarrow [L_t^{(i)}]^{-1/4}, \\
            \bar{R}_t^{(i)} & \leftarrow [R_t^{(i)}]^{-1/4}
        \end{align*}
        \State Compute Shampoo search direction:
        \begin{align*}
            P_{t, \text{Shampoo}}^{(i)} & \leftarrow \bar{L}_t^{(i)} G_t^{(i)} \bar{R}_t^{(i)} \\
            P_{t, \text{graft}}^{(i)} & \leftarrow G_t^{(i)} / ([A_t^{(i)}]^{1/2} + \epsilon_{\text{graft}} 1_{d_i} 1_{d_{i - 1}}^T) \\
            P_t^{(i)} & \leftarrow - \left\| P_{t, \text{graft}}^{(i)} \right\|_F \frac{P_{t, \text{Shampoo}}^{(i)}}{\left\| P_{t, \text{Shampoo}}^{(i)} \right\|_F}
        \end{align*}
        \State Update parameter: 
        $$W_{t + 1}^{(i)} \leftarrow W_t^{(i)} + \alpha_t P_t^{(i)}$$
        \EndFor
        \EndFor
    \end{algorithmic}
\end{algorithm}

\section{Implementation Details}
\label{sec:implementation}

Many additional improvements and heuristics are incorporated into the Shampoo optimizer implementations. Several of these heuristics have been employed in the JAX and OPTAX implementations of Shampoo and have also been incorporated into our PyTorch implementation here \cite{paszke2019pytorch, jax2018github, anil2021scalable}. We provide a high-level description of different heuristics, including using exponentially-weighted moving average estimates of the first- and second-moments, weight decay, momentum and Nesterov acceleration, and the exponent multiplier and override options. A complete description of the algorithm including learning rate grafting, the main heuristics, and the main distributed memory/computation performance optimization is provided in Algorithm \ref{alg: complete shampoo}. (We ignore merging and blocking here.)

\begin{algorithm}
    \caption{Complete Distributed Shampoo Pseudocode (on Worker $j$)}
    \label{alg: complete shampoo}
    \footnotesize
    \begin{algorithmic}
        \Require Parameters $\{W^{(i)}\}_{i = 1}^n$ with $W_0^{(i)} \equiv W^{(i)} \in \mathbb{R}^{d_i \times d_{i - 1}}$, learning rate schedule $\{\alpha_t\}_{t = 1}^T$ with $\alpha_t > 0$, exponential moving average weights $\beta_1 \in [0, 1)$, $\beta_2 \in (0, 1]$, momentum $\mu > 0$, weight decay $\lambda \geq 0$, period for computing root inverse \texttt{precondition\_frequency}, initial iteration for using Shampoo preconditioning \texttt{start\_preconditioning\_step}, grafting method, maximum number of iterations $T$, flag for bias correction \texttt{use\_bias\_correction}, flag for decoupled weight decay \texttt{use\_decoupled\_weight\_decay}, number of workers $J$, number of workers per group $J_G$
        \State
        \State Assign preconditioners to different workers using a greedy method $I_1, I_2, ..., I_J \subset [n]$ based on $d_0, d_1, ..., d_n$.
        \For{$i \in I_j$}
        \State Set $L_{-1}^{(i)} = 0 \in \mathbb{R}^{d_i \times d_i}$, $R_{-1}^{(i)} = 0 \in \mathbb{R}^{d_{i - 1} \times d_{i - 1}}$. \Comment{Initialize Shampoo states.}
        \State Set $\tilde{G}_{-1}^{(i)} = 0 \in \mathbb{R}^{d_i \times d_{i - 1}}$ if $\beta_1 > 0$, $M_{-1}^{(i)} = 0 \in \mathbb{R}^{d_i \times d_{i - 1}}$ if $\mu > 0$. \Comment{Initialize additional states.}
        \State Set $A_{-1}^{(i)} = 0 \in \mathbb{R}^{d_i \times d_{i - 1}}$ (if necessary). \Comment{Initialize grafting state (if necessary).}
        \EndFor
        \State
        \For{$t = 0, 1, 2, ..., T - 1$}
        \State Compute (mini-batch) stochastic gradient $G_t^{(i)} = \nabla_{W^{(i)}} f_t(w) \in \mathbb{R}^{d_i \times d_{i - 1}}$ for all parameters $i$ in DDP fashion.
        \For{$i \in I_j$}
        \If{$\lambda > 0$ and not \texttt{use\_decoupled\_weight\_decay}} \Comment{Incorporate $\ell_2$-regularization.}
        \State $G_t^{(i)} \leftarrow G_t^{(i)} + \lambda W_t^{(i)}$
        \EndIf
        \If{$\beta_2 < 1$} \Comment{Update Shampoo factor matrices.}
        \State $L_t^{(i)} \leftarrow \beta_1 L_{t - 1}^{(i)} + (1 - \beta_1) G_t^{(i)} [G_t^{(i)}]^T$
        \State $R_t^{(i)} \leftarrow \beta_1 R_{t - 1}^{(i)} + (1 - \beta_1) [G_t^{(i)}]^T G_t^{(i)}$
        \Else
        \State $L_t^{(i)} \leftarrow L_{t - 1}^{(i)} + G_t^{(i)} [G_t^{(i)}]^T$
        \State $R_t^{(i)} \leftarrow R_{t - 1}^{(i)} + [G_t^{(i)}]^T G_t^{(i)}$
        \EndIf
        \State $A_t^{(i)} \leftarrow \texttt{UpdateGraftingState}(A_{t - 1}^{(i)}, G_t^{(i)})$ \Comment{Update grafting method's state (if necessary).}
        \If{$t \geq \texttt{start\_preconditioning\_step}$ and $t ~ \% ~ \texttt{precondition\_frequency} = 0$}
        \State $\bar{L}_t^{(i)} \leftarrow \texttt{ComputeMatrixRootInverse}(L_t^{(i)}, \epsilon, t, \texttt{use\_bias\_correction})$ 
        \State $\bar{R}_t^{(i)} \leftarrow \texttt{ComputeMatrixRootInverse}(R_t^{(i)}, \epsilon, t, \texttt{use\_bias\_correction})$
        \EndIf
        \If{$\beta_1 > 0$} \Comment{Compute filtered/exponential moving averaged gradient.}
        \State $\tilde{G}_t^{(i)} \leftarrow \beta_1 \tilde{G}_{t - 1}^{(i)} + (1 - \beta_1) G_t^{(i)}$
        \EndIf
        \State $P_{t, \text{graft}}^{(i)} \leftarrow \texttt{ComputeGraftingDirection}(\tilde{G}_t^{(i)}, t, \texttt{use\_bias\_correction})$ \Comment{Compute grafting direction.}
        \If{$t \geq \texttt{start\_preconditioning\_step}$} \Comment{Compute scaled Shampoo direction.}
        \State $P_t^{(i)} \leftarrow - \left\|P_{t, \text{graft}}^{(i)} \right\|_F \frac{\bar{L}_t^{(i)} \tilde{G}_t^{(i)} \bar{R}_t^{(i)}}{\|\bar{L}_t^{(i)} \tilde{G}_t^{(i)} \bar{R}_t^{(i)}\|_F}$
        \Else \Comment{Use grafting search direction.}
        \State $P_t^{(i)} \leftarrow P_{t, \text{graft}}^{(i)}$
        \EndIf
        \If{$\lambda > 0$ and \texttt{use\_decoupled\_weight\_decay}} \Comment{Incorporate decoupled weight decay.}
        \State $P_t^{(i)} \leftarrow P_t^{(i)} + \lambda W_t^{(i)}$
        \EndIf
        \If{$\mu > 0$} \Comment{Incorporate momentum.}
        \State $M_t^{(i)} \leftarrow \mu M_t^{(i)} + P_t^{(i)}$
        \If{\texttt{use\_nesterov}}
        \State $P_t^{(i)} \leftarrow \mu M_t^{(i)} + P_t^{(i)}$
        \Else
        \State $P_t^{(i)} \leftarrow M_t^{(i)}$
        \EndIf
        \EndIf
        \EndFor
        \State $\{P_t^{(i)}\}_{i = 1}^n \leftarrow \texttt{AllGather}( \{P_t^{(i)}\}_{i \in I_j})$
        \State $W_{t + 1}^{(i)} \leftarrow W_t^{(i)} + \alpha_t P_t^{(i)}$ for all $i = 1, ..., n$. \Comment{Update parameters.}
        \EndFor
    \end{algorithmic}
\end{algorithm}

\subsection{Training Heuristics}

In this subsection, we describe some of the additional heuristics that are commonly used with deep learning optimizers and that have been enabled with Shampoo and layer-wise learning rate grafting. When possible, we provide intuition for each heuristic.

\subsubsection{First and Second Moment Estimation}

It is common to use gradient filtering or exponential moving averages of the ``first moment'' of the gradient. This has been widely interpreted as the natural extension of momentum to adaptive gradient methods, and has been demonstrated to be useful for deterministic nonsmooth optimization as well as deep learning training; see \cite{boyd2003subgradient, kingma2014adam}. More specifically, we can filter the gradient estimator $\tilde{G}_t^{(i)}$ via exponential moving averaging and use this in place of $G_t^{(i)}$ where
\begin{equation}
    \tilde{G}_t^{(i)} = \beta_1 \tilde{G}_{t - 1}^{(i)} + (1 - \beta_1) G_t^{(i)},
\end{equation}
with $\tilde{G}_{-1}^{(i)} = 0$. When grafting, the grafted method's state is updated using the original stochastic gradient $G_t^{(i)}$, but the search direction is computed based on the filtered gradient $\tilde{G}_t^{(i)}$ instead.

One can similarly apply exponential moving averages of the Shampoo approximation for matrices as follows:
\begin{align}
    L_{t}^{(i)} & = \beta_2 L_{t - 1}^{(i)} + (1 - \beta_2) G_t^{(i)} [G_t^{(i)}]^T, \\
    R_{t}^{(i)} & = \beta_2 R_{t - 1}^{(i)} + (1 - \beta_2) [G_t^{(i)}]^T G_t^{(i)},
\end{align}
with $L_{-1}^{(i)} = 0$ and $R_{-1}^{(i)} = 0$. A similar modification can be made for Shampoo for general tensors. A bias correction term can be employed by setting $\hat{G}_t^{(i)} = \tilde{G}_t^{(i)} / (1 - \beta_1^{t + 1})$, $\hat{L}_t^{(i)} = L_t^{(i)} / (1 - \beta_2^{t + 1})$, $\hat{R}_t^{(i)} = R_t^{(i)} / (1 - \beta_2^{t + 1})$, etc. Bias correction can be interpreted either as an implicit modification to the learning rate schedule or an approach to ensure that the statistical estimate is unbiased, particularly when only a few updates of the exponential moving average have been performed; see \cite{kingma2014adam}.\\

\noindent \textbf{Usage:} To use exponential moving averaging of these quantities, one should set \texttt{betas = (beta1, beta2)} to the desired values. If \texttt{beta2 = 1}, then the implementation will use the standard summation. To enable bias correction, simply set the flag \texttt{use\_bias\_correction = True}. (This is enabled by default.)

\subsubsection{$\ell_2$-Regularization and (Decoupled) Weight Decay}

There are two variants of regularization that we have enabled: (1) standard $\ell_2$ regularization and (2) decoupled weight decay. Weight decay is sometimes used to refer to appending an L2-regularization term to the training loss function, i.e.,
\begin{equation}
    \min_{w \in \mathbb{R}^d} \mathbb{E}_{(x, y) \sim \mathcal{D}}\left[\ell(m(w; x); y) + \frac{\lambda}{2} \|w\|^2\right].
\end{equation}
From an implementation perspective, this modifies the gradient by adding an additional regularization term: $G_t^{(i)} \leftarrow G_t^{(i)} + \lambda W_t^{(i)}$ for all $i$. Notice that this impacts all aspects of the optimizer, including the gradients used in the updates of the Shampoo preconditioners and grafting method.

On the other hand, weight decay as originally introduced in \citet{hanson1988comparing} --- now commonly referred to as \emph{decoupled} weight decay following \citet{loshchilov2017decoupled} --- is not equivalent to $\ell_2$ regularization in general.\footnote{It is equivalent to $\ell_2$-regularization when using SGD through a reparameterization~\cite{loshchilov2017decoupled}.} Decoupled weight decay involves a modification of the training algorithm outside of the preconditioner. More precisely, it is defined as:
\begin{align}
    W_{t + 1}^{(i)} & = (1 - \alpha_t \lambda) W_t^{(i)} - \alpha_t P_t^{(i)} \\
    & = W_t^{(i)} - \alpha_t (P_t^{(i)} + \lambda W_t^{(i)})
\end{align}
for $i = 1, ..., n$. This method has been interpreted as a first-order approximation to a proximal method for enforcing $\ell_2$-regularization that is scale-invariant, i.e., the method remains the same even when the objective function is multiplied by some positive constant and eases hyperparameter tuning \cite{zhuang2022understanding}.

Decoupled weight decay is often implemented as a separate transformation of the parameters unless combined with momentum, as we see below. In our experiments, we found that decoupled weight decay is more effective in obtaining solutions with better generalization. Decoupled weight decay is also implemented independent of learning rate grafting, that is, 
\begin{equation}
    W_{t + 1}^{(i)} = (1 - \alpha_t \lambda) W_t^{(i)} - \alpha_t \left\|P_{t, \text{graft}}^{(i)} \right\|_F \frac{P_{t, \text{Shampoo}}^{(i)}}{\left\|P_{t, \text{Shampoo}}^{(i)} \right\|_F}.
\end{equation}

\sloppypar\noindent \textbf{Usage:} To use weight decay with parameter $\lambda$, one should set the argument \texttt{weight\_decay}. To toggle between decoupled weight decay and $\ell_2$ regularization, one can use the \texttt{use\_decoupled\_weight\_decay} flag, which is \texttt{True} by default.

\subsubsection{Momentum and Nesterov Acceleration}

For some applications, momentum and Nesterov acceleration are imperative for achieving good generalization performance and have been successfully employed with the Shampoo optimizer \cite{anil2021distributed}. This differs from first-moment estimation or gradient filtering in its functional form through its aggregation, not of the gradients, but of the search direction of the algorithm. In particular, given the Shampoo search direction $P_t^{(i)}(w_t)$ for layer $i$ at weights $w_t$, the momentum update is defined as:
\begin{align}
    M_t^{(i)} & = \mu_t M_{t - 1}^{(i)} + P_t^{(i)}(w_t) \label{eq:momentum1}\\
    W_{t + 1}^{(i)} & = W_t^{(i)} - \alpha_t M_t^{(i)}, \label{eq:momentum2}
\end{align}
with (potentially iterate-dependent) momentum parameter $\mu_t > 0$. Normally in practice, $\mu_t = 0.9$ is fixed over all iterations.

Similarly, what is known as \textit{Nesterov momentum} or \textit{Nesterov acceleration} applies a momentum-like term a second time within the update:
\begin{align}
    M_t^{(i)} & = \mu_{t - 1} M_{t - 1}^{(i)} + P_t^{(i)}(w_t) \label{eq:nesterov1} \\
    W_{t + 1}^{(i)} & = W_t^{(i)} - \alpha_t (\mu_t M_t^{(i)} + P_t^{(i)}(w_t)). \label{eq:nesterov2}
\end{align}
While momentum and Nesterov acceleration are widely used in conjunction with SGD, momentum and Nesterov acceleration methods are misnomers given that they arise from methods for minimizing deterministic quadratic functions and strongly convex functions with Lipschitz continuous gradients, with a specialized choice of $\mu_t$. These intuitions and approximations do \textit{not} necessarily hold in the stochastic regime. We provide an alternative interpretation here, building on \cite{defazio2020momentum} that re-interprets the methods as \textit{stochastic primal iterate averaging} \cite{tao2018primal}. 

In particular, one can show that the momentum method \eqref{eq:momentum1}-\eqref{eq:momentum2} is equivalent to the iteration:
\begin{align}
    Z_{t + 1}^{(i)} & = Z_t^{(i)} - \eta_t P_t^{(i)}(w_t) \label{eq:iteravg1}\\
    W_{t + 1}^{(i)} & = c_t W_t^{(i)} + (1 - c_t) Z_{t + 1}^{(i)} \label{eq:iteravg2}
\end{align}
for $c_t \in (0, 1)$ and $\eta_t > 0$. This is similar to exponential moving averaging applied on the weights, a close variant of Polyak-Ruppert averaging \cite{polyak1992acceleration} and stochastic weight averaging \cite{izmailov2018averaging}. Rather than generating a sequence of averaged weights independent of the original sequence, this algorithm uses the intermediate averaged weights to determine the search direction at each step. Similarly, one can show that the Nesterov accelerated method \eqref{eq:nesterov1}-\eqref{eq:nesterov2} is equivalent to the iteration:
\begin{align}
    Z_{t + 1}^{(i)} & = Z_t^{(i)} - \eta_t (P_t^{(i)}(w_t) + \mu_t (P_t^{(i)}(w_t) - P_{t - 1}^{(i)}(w_{t - 1})) \label{eq:iteravg3} \\
    W_{t + 1}^{(i)} & = c_t W_t^{(i)} + (1 - c_t) Z_{t + 1}^{(i)}. \label{eq:iteravg4}
\end{align}
A formal proof for both of these equivalences is provided in Appendix \ref{app: proofs}.

This interpretation provides a principled approach for incorporating weight decay and gradient filtering into momentum and Nesterov acceleration appropriately - momentum should be applied on top of all changes to the update to the parameters, including the filtered gradient and weight decay terms. Because of this interpretation, while gradient filtering and momentum may appear similar on the surface, they should be viewed as \textit{orthogonal} changes to the algorithm, and therefore we have included both options in our implementation. In addition, this technique can be used even when changing between different search directions, as it is primarily incorporating a form of iterate averaging; this motivates our design choice of using a consistent momentum term for both the grafting method and Shampoo when incorporating an initial grafting warmup phase. \\

\noindent \textbf{Usage:} To enable momentum, simply set \texttt{momentum} to a positive number; 0.9 or 0.5 is a common setting. To toggle Nesterov acceleration, set the Boolean variable \texttt{use\_nesterov}.

\subsubsection{Exponent Override and Exponent Multiplier}

Consistent with \cite{anil2021scalable}, we allow the user to modify the exponent used for Shampoo through two options: \texttt{exponent\_override} and \texttt{exponent\_multiplier}. These two options correspond to the following change:
\begin{equation}
    W_{t + 1} = W_t - \alpha_t L_t^{-\eta / p} G_t R_t^{-\eta / p}
\end{equation}
where $\eta > 0$ is the exponent multiplier and $p \in \mathbb{N}$ corresponds to the exponent override. Note that $p$ will override the standard root of $2 \omega$, where $\omega$ is the order of the tensor parameter. We have found that using either an exponent override of $p = 2$ or exponent multiplier of $\eta = 1.82$ is often effective in practice for training networks dominated by fully-connected linear layers; for more details as to why this may be the case, see Shampoo's relationship with AdaFactor in Appendix \ref{app: relationships}. \\

\noindent \textbf{Usage:} To enable, one can set \texttt{exponent\_override} as any integer and \texttt{exponent\_multiplier} as a positive number. 

\subsection{Numerical Considerations}

When implementing Shampoo, one must consider how to efficiently and accurately compute the root inverse of the factor matrices. If the root inverse is computed too inaccurately, the resulting search directions may not even be guaranteed to be descent directions in expectation! However, computing the root inverse with unnecessarily high accuracy may significantly slow down the iteration. In this subsection, we consider \emph{how} to compute the root inverse of the preconditioner as well as describe the empirical impact of numerical precision on the matrix root inverse computation.

\subsubsection{Matrix Root Inverse Solvers}

We describe different approaches that have been implemented for computing the root inverse of each factor matrix. As noted above, all factor matrices $L, R$ are symmetric positive semi-definite by definition, and we want to compute the $p$-th inverse root of $L^{-1/p}, R^{-1/p}$. By default, our implementation uses the symmetric eigendecomposition approach.

\begin{enumerate}
    \item \textit{Symmetric Eigendecomposition}: Since the factor matrices for each block preconditioner are symmetric, we can apply the symmetric eigendecomposition solver \texttt{torch.linalg.eigh} to compute the eigendecomposition for each preconditioner. In particular, ignoring the iteration number, let $L = Q_L \Lambda_L Q_L^T$ and $R = Q_R \Lambda_R Q_R^T$ be the eigendecompositions for $L$ and $R$, respectively, where $\Lambda_L, \Lambda_R$ are diagonal matrices consisting of their eigenvalues and $Q_L, Q_R$ are orthogonal matrices consisting of their eigenvectors. The standard approach for computing the root inverse is to compute the root inverse of the eigenvalues and reconstruct the matrix by taking the root inverse of their eigenvalues, i.e., $L^{-1/p} = Q_L \Lambda_L^{-1/p} Q_L^T$ and $R^{-1/p} = Q_R \Lambda_R^{-1/p} Q_R^T$. As expected, the computational cost of computing the matrix root inverse using a symmetric eigendecomposition is $O(n^3)$. \\
    
    In the presence of small positive or zero eigenvalues, numerical errors may cause some of the eigenvalues returned by the eigendecomposition solver to be negative. This is problematic since we cannot take the root inverse of a negative eigenvalue. Although one can add a multiple $\epsilon I$ of the identity, it is not clear how large to choose $\epsilon$ to avoid this problem. For this reason, we incorporate a heuristic to ensure that all eigenvalues are sufficiently positive. \\
    
    The heuristic is detailed as follows:\\
    
    \begin{mdframed}[backgroundcolor=black!10, linewidth=1pt]
    \textbf{Symmetric Eigendecomposition Approach for Computing Root Inverse}\\
    Given $L \in \mathbb{R}^{n \times n}$ (or $R$), perturbation $\epsilon > 0$, and desired exponent $r$.
    \begin{enumerate}
        \item Compute symmetric eigendecomposition $\lambda, Q \leftarrow \texttt{eigh}(L)$ where $\lambda \in \mathbb{R}^n$ and $Q \in \mathbb{R}^{n \times n}$.
        \item Compute $\lambda_{\min} \leftarrow \min_i \lambda_i$. 
        \item Compute $\lambda_{new} \leftarrow \lambda - \min(\lambda_{\min}, 0) 1 + \epsilon 1$.
        \item Form and return matrix root inverse $L_{inv} \leftarrow Q \diag(\lambda_{new}^{-r}) Q^T$. \\
    \end{enumerate}
    \vspace{-10pt}
    \end{mdframed}
    
    \vspace{10pt}
    
    We found this approach to be stable for even small choices of epsilon, such as $\epsilon = 10^{-12}$, as suggested in previous implementations. \\

    \item \textit{Coupled Inverse Newton Iteration}: Rather than employing a direct approach that decomposes the factor matrices and constructs the root inverse, we can instead consider \textit{iterative methods} to compute the root inverse. The coupled inverse Newton iteration is one such stable variant of Newton's method that requires an appropriate initialization of the matrix in order to guarantee convergence \cite{higham2008functions}. If we are interested in computing the matrix root inverse of $L \in \mathbb{R}^{n \times n}$, the coupled inverse Newton iteration is defined as follows:
    \begin{align}
        & & X_{k + 1} & = X_k \left(\frac{(p + 1) I - M_k}{p} \right), & X_0 & = \frac{1}{c} I, & & \\
        & & M_{k + 1} & = \left(\frac{(p + 1) I - M_k}{p} \right)^p M_k, & M_0 & = \frac{1}{c^p} L, & &
    \end{align}
    where $c \in \mathbb{R}$ determines the initialization. Assuming proper initialization, one expects $X_k \rightarrow L^{-1/p}$ and $M_k \rightarrow I_n$ in $O(n^3)$ FLOPs.  \\
    
    In order to guarantee convergence of the algorithm (see Theorem 7.12 in \cite{higham2008functions}), we must establish that all the eigenvalues are contained in the interval $[0, (p + 1)c^p)$. Since $\lambda(L) \in (0, \|L\|_2]$, it is sufficient to choose $c$ such that $\|L\|_2 < (p + 1) c^p$. Note that $\|L\|_2$ is expensive to compute, so we can instead bound $\|L\|_2 \leq \|L\|_F$ and require $\|L\|_F < (p + 1) c^p$. Therefore, we must have $c > \left(\frac{\|L\|_F}{p + 1} \right)^{1/p}$. One practical choice of $c$ is $c = \left(\frac{2 \|L\|_F}{p + 1} \right)^{1/p}$. \\
    
    To terminate the algorithm, we use the termination criterion based on $M_k$ as suggested by \cite{higham2008functions}:
    \begin{equation}
        \|M_k - I\|_{\infty} < \texttt{TOL}
    \end{equation}
    for some tolerance $\texttt{TOL} > 0$. By default, we set $\texttt{TOL} = 10^{-6}$. Note that this does not support the exponent multiplier option.
\end{enumerate}

\noindent Alternative solvers for efficiently computing matrix root inverses is an active area of research (see \cite{song2022fast, shumeli2022low, fasi2023computing}), and is left for future investigation.

\subsubsection{Precision for the Accumulation and Root Inverse Computation}

It is common to use low precision (\texttt{FP16}, \texttt{BFLOAT16}, \texttt{FP8}) in the forward and backward passes to compute the gradients. However, in order to ensure that we have sufficient accuracy in the matrix root inverse computation, we accumulate the factor matrices in \texttt{FP32} or \texttt{FP64} precision. With the symmetric eigendecomposition approach, we have found that using \texttt{FP32} is sufficient, although the expected accuracy of the computation depends on the condition number as well as the gaps between consecutive eigenvalues \cite{golub2013matrix}. Therefore, the choice of precision may be model-dependent based on the eigenvalue spectrum of each factor matrix for each parameter. 

\subsubsection{Guarding Against Eigendecomposition Failures}
\label{sec:guard}

In order to protect against catastrophic failure of the \texttt{torch.linalg.eigh} kernel when applied to certain edge cases, we have enabled a retry mechanism with different precisions. The logic works as follows:
\begin{enumerate}
    \item Attempt to compute \texttt{eigh(L)} in chosen precision (typically, \texttt{FP32}). If successful, continue.
    \item Attempt to compute \texttt{eigh(L.double())} in double precision. If successful, continue.
    \item Otherwise, skip computation and proceed with previously computed matrix root inverse.
\end{enumerate}

\noindent \textbf{Usage:} The guarding mechanism is enabled by default through the flag \texttt{use\_protected\_eigh}.

\section{Memory and Performance Optimizations}
\label{sec:performance}

In this section, we describe some of the memory and performance optimizations to improve both the memory footprint and speed (or wall-clock-time-per-iteration) of the algorithm. We focus primarily on optimizing for GPU architectures, although CPU architectures are also supported by our implementation.

\subsection{Distributed Memory and Preconditioner Computation}

While the Shampoo algorithm has been demonstrated to be more efficient than diagonal adaptive gradient methods at minimizing the objective function per-iteration, the additional FLOPs introduced by matrix multiplications (in lieu of element-wise multiplication) and passes to memory for intermediate buffer reads slow down the per-iteration wall-clock time. If we operate under the standard distributed data-parallel regime, where the optimizer step is replicated across all workers, each step of Shampoo will be  slower.\footnote{In the case of some large-scale models, each step could be potentially even 50-75\% slower than standard diagonal adaptive gradient methods!} An ideal practical implementation of Shampoo should have the cost of each iteration be as efficient as diagonal-scaling-based adaptive gradient methods.

In order to reduce the memory footprint and improve the computational efficiency and systems utilization of our implementation, we propose to distribute the preconditioners and their associated compute across all workers, similar to \cite{rajbhandari2020zero}. In particular, we will assign each preconditioner (including its factor matrices $L$, $R$ and its grafting state $A$) to only one or a small subset of workers. Each worker will only be responsible for computing the matrix multiplications required for maintaining its assigned preconditioners' optimizer states, as well as the corresponding part of the global search direction. After each preconditioned search direction is computed, we perform an \allgather so that all workers have the search directions for all parameters, and then they apply the parameter updates. An additional sufficiently sized buffer is required for this communication.

The pseudocode for this optimization is detailed in Algorithm \ref{alg: complete shampoo}. Figure \ref{fig:num gpus per group} shows how a single Shampoo \texttt{step} is distributed and communicated with this optimization. We detail how we implemented this optimization further below.

\begin{figure}
    \centering
    \includegraphics[width=0.9\textwidth]{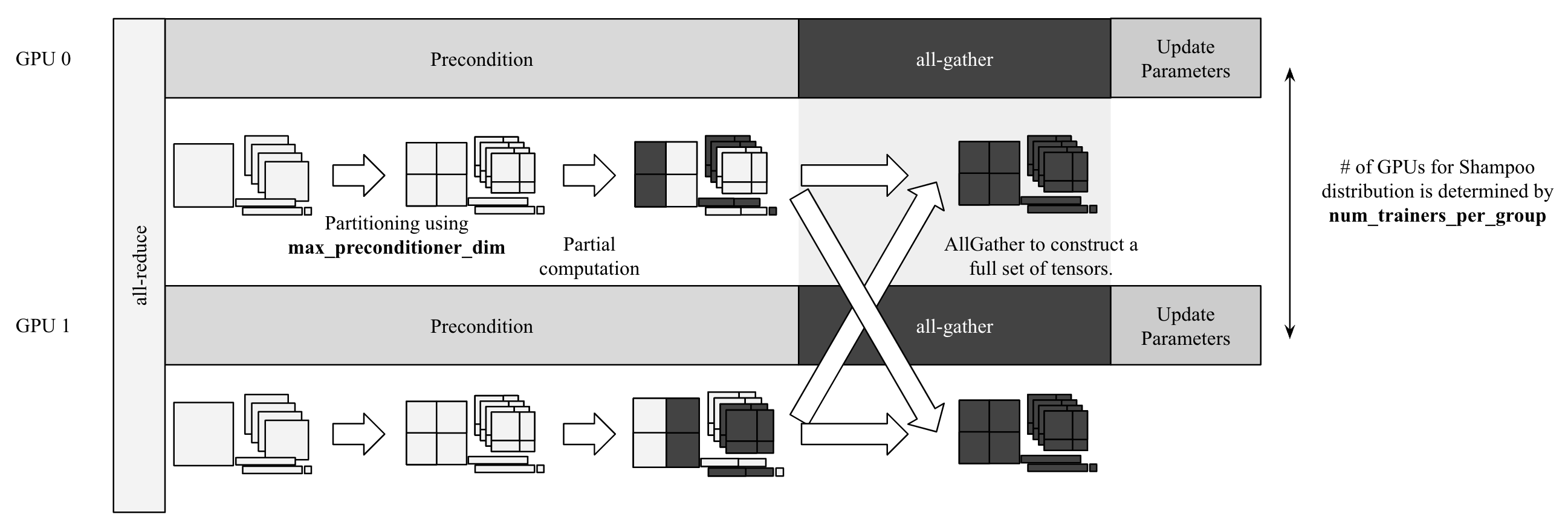}
    \caption{A single optimizer \texttt{step} of Distributed Shampoo.}
    \label{fig:num gpus per group}
\end{figure}

\subsubsection{Preconditioner Assignment and Load-Balancing via Greedy Algorithm}

In order to distribute the preconditioner memory and computation, the preconditioners need to be partitioned across all workers. Since the \texttt{AllGather} is performed on the search directions, we choose to load-balance based on its memory cost and assign preconditioners to ensure that the maximum buffer size for each worker is minimized. To do this, we employ a sorted greedy approximation algorithm as described in Algorithm \ref{alg: greedy}. The key idea is to sort the parameters based on number of variables, and assign each parameter in descending order to the worker with the fewest variables. The assignments are made prior to the instantiation of the preconditioners; see Algorithm~\ref{alg: complete shampoo}.

\begin{algorithm}
    \caption{Greedy Load-Balancing Assignment for Homogeneous Architectures}
    \label{alg: greedy}
    \begin{algorithmic}
        \Require Number of variables per parameter $d^{(1)}, ..., d^{(n)}$, total number of workers (world size) $J$, number of workers per process group $J_G$
        \State
        \State Sort the parameters such that $d^{(k_1)} \geq d^{(k_2)} \geq ... \geq d^{(k_n)}$ for $k_i \in [n]$.
        \State Initialize assignment sets $I_1 = \{\}, ..., I_J = \{\}$, where $I_j$ assigns the indexed parameters in the set to worker $j$.
        \State Initialize variable counters $C_1 = 0, ..., C_{J_G} = 0$.
        \For{$i = 1, ..., n$}
        \State Find the workers with the least variables: $\tilde{k}_i \in \arg\min_{k \in [n]} C_k$.
        \State Assign $I_{(j - 1) J_G + {\tilde{k}_i}} \leftarrow I_{(j - 1) J_G + \tilde{k}_i} \cup \{i\}$ for all $j \in [J / J_G]$.
        \EndFor
        \State Return assignments $\{I_j\}_{j = 1}^J$.
    \end{algorithmic}
\end{algorithm}

The distributed \allgather buffer will have length $Q_G \max_{j \in [n]} C_j$. In our implementation, we choose to use the \texttt{int8} data type for the distributed buffer, regardless of the precision being used. Figure \ref{fig:max buffer size} shows how using the maximum buffer size allocation may result in additional memory consumption.

\begin{figure}
    \centering
    \includegraphics[width=0.9\textwidth]{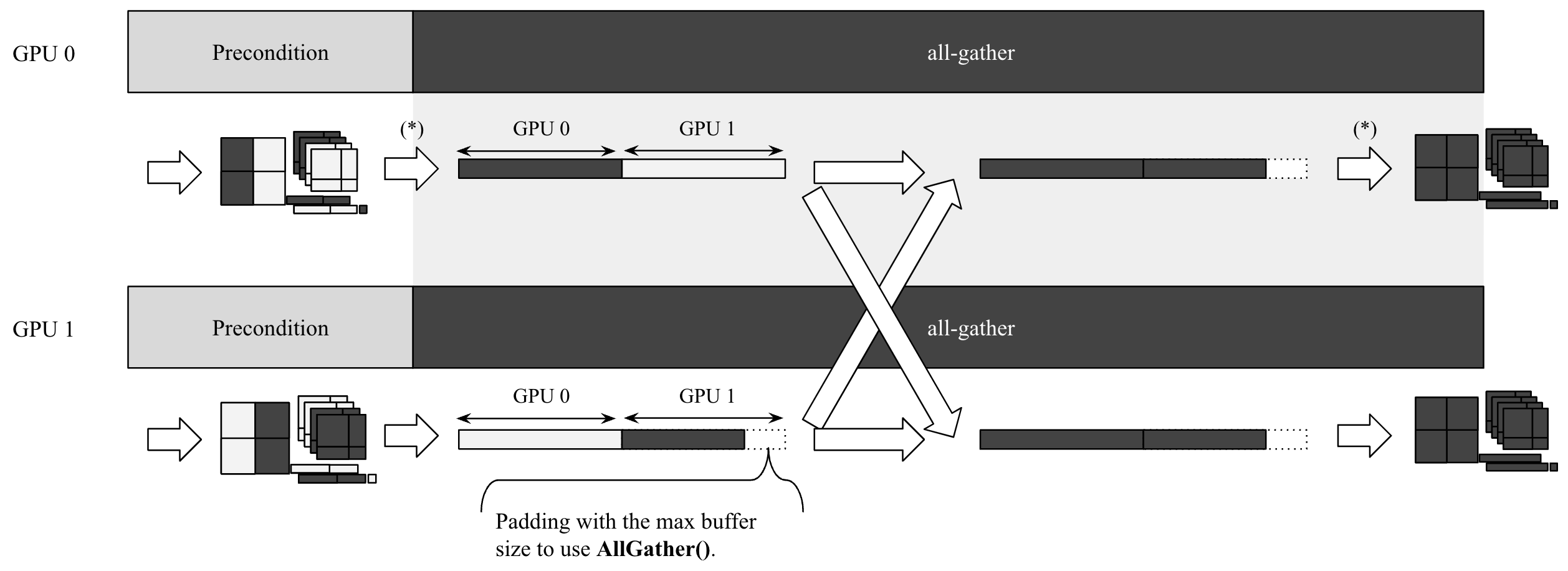}
    \caption{Maximum buffer size allocation for \texttt{AllGather} primitive.  In practice, the preconditioned tensors is a collection of \texttt{view()} of the 1D communication buffer tensor to avoid extra copy at ($\ast$) in the figure.}
    \label{fig:max buffer size}
\end{figure}

\subsubsection{Balancing Computation and Communication Through Multiple Process Groups}

As opposed to distributing the preconditioners across all workers, which may lead to high communication costs relative to the amount of compute per-worker, one can instead create multiple distinct process groups that partition the global world into smaller process groups. By distributing the computation within each process group while replicating the computation across different process groups, we can achieve more balanced compute and communication costs and observe higher performance gains. This form of hierarchical parallelism maps efficiently to the underlying systems architecture and topology as well.

We therefore provide a user parameter \texttt{num\_trainers\_per\_group} (corresponding to $Q_G$ in Algorithm \ref{alg: greedy}), which specifies the number of workers each process group should contain. Here, we assume that the user is running the algorithm on a homogeneous system architecture, where each node contains the same number of workers. In particular, we require that the \texttt{num\_trainers\_per\_group} divides the total world size with no remainder. By default, \texttt{num\_trainers\_per\_group} is equal to the number of workers per node, although this is usually not ideal for training large-scale models. 

\subsubsection{\texttt{DTensor} State Allocation}

In order to distribute the memory used by the factor matrices, grafting, momentum, and filtered gradient states, we use a new PyTorch data structure called \texttt{DTensor} (for ``Distributed Tensor''), which enhances the standard \texttt{Tensor} data structure with mesh information that describes how the tensor should be sharded or replicated across multiple workers. This enables \texttt{DTensor} to support multi-level parallelism, including various combinations of data parallelism, tensor parallelism, and pipeline parallelism. By using \texttt{DTensor}, we can specify the tensor to be replicated across only a small subset of workers, while recognizing the existence of the distributed tensor on every rank, which is necessary for creating efficient distributed checkpointing solutions. 

This solution enables us to approximately reduce the overall memory cost per-worker by a factor of $Q_G$. Note that this will depend on the quality of load-balancing, which depends on the distribution of the parameter shapes. To enable \texttt{DTensor}, one can use the \texttt{use\_dtensor} flag (this is enabled by default).

\subsection{Handling Large-Dimensional Tensors}
\label{sec:large tensors}

Shampoo significantly reduces the amount of memory required to produce a block-diagonal approximation compared to full-matrix AdaGrad. However, for tensors with large dimensions, i.e., $d_i \gg 0$ for some $i$, it is still possible for Shampoo to remain infeasible in terms of its computational and memory cost. In order to reduce memory consumption, we have enabled multiple approaches for handling large tensors consistent with those suggested in \cite{gupta2018shampoo, anil2021scalable}. We present these approaches for completeness in the order from most-to-least memory-consuming. All approaches rely on the same hyperparameter \texttt{max\_preconditioner\_dim}. The memory and computational cost of each of these approaches is summarized in Table \ref{tab:memory-computation reqs}. 

\begin{table}
    \centering
    \begin{tabular}{ccccc} \toprule
         & \multicolumn{2}{c}{Matrix ($d_1 \times d_2$)} & \multicolumn{2}{c}{Order-$\omega$ Tensor ($d_1 \times ... \times d_{\omega}$)} \\
         \texttt{LargeDimMethod} & Memory Cost & Computational Cost & Memory Cost & Computational Cost \\ \midrule
        \texttt{BLOCKING} & $4 d_1 d_2$ & $O(b^3)$ & $\frac{2 \omega}{b^{\omega - 2}} \prod_{i = 1}^\omega d_i$ & $O(b^3)$ \\
        \texttt{ADAGRAD} & $d_1 d_2$ & $O(d_1 d_2)$ & $\prod_{i = 1}^{\omega} d_i$ & $O(\prod_{i = 1}^{\omega} d_i)$ \\
        \texttt{DIAGONAL} & $d_1 + d_2$ & $O(d_1 d_2)$ & $\sum_{i = 1}^{\omega} d_i$ & $O(\prod_{i = 1}^{\omega} d_i)$ \\ \bottomrule
    \end{tabular}
    \caption{Summary of memory and computational requirements for different large-dimensional methods for matrices and general tensors. Assumes that $b$ is the block size.}
    \label{tab:memory-computation reqs}
\end{table}

\subsubsection{Merging and Blocking}

Instead of applying Shampoo to the full tensor, we can instead reshape the tensor by merging small dimensions and blocking the tensor into multiple smaller sub-tensors. On one extreme, blocking enables us to use a coarser approximation at lower memory and computational cost. It is an ideal approximation since it preserves the original tensor structure of the parameters that Shampoo relies upon for its Kronecker product approximation. On the other extreme, merging dimensions enables us to remove unnecessary dimensions and move towards using full-matrix AdaGrad for that particular parameter.

\begin{figure}
    \centering
    \includegraphics[align=c, width=0.15\textwidth]{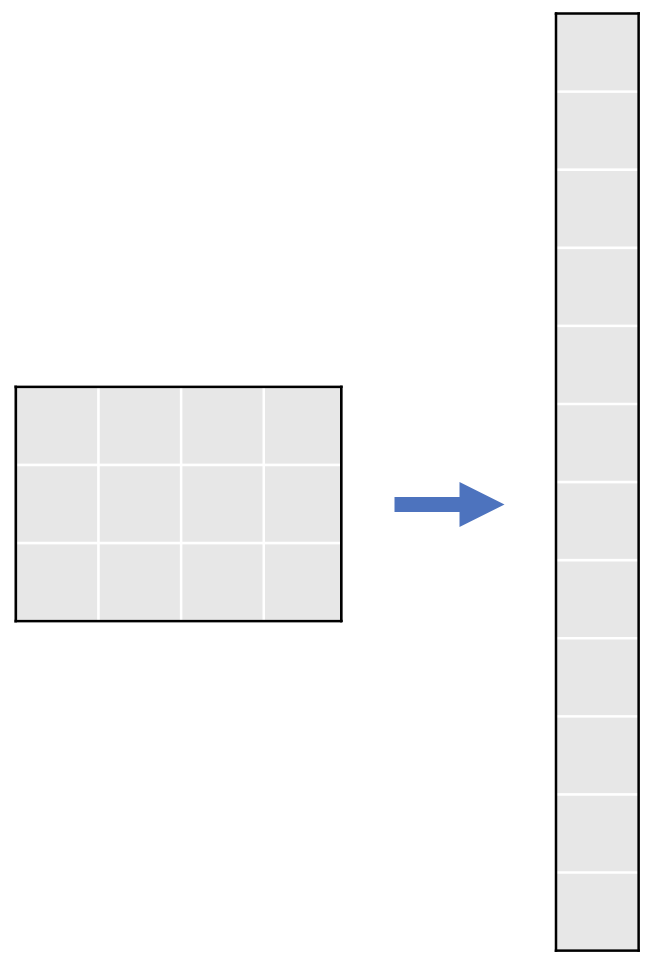} 
    \hspace{20pt}
    \includegraphics[align=c, width=0.4\textwidth]{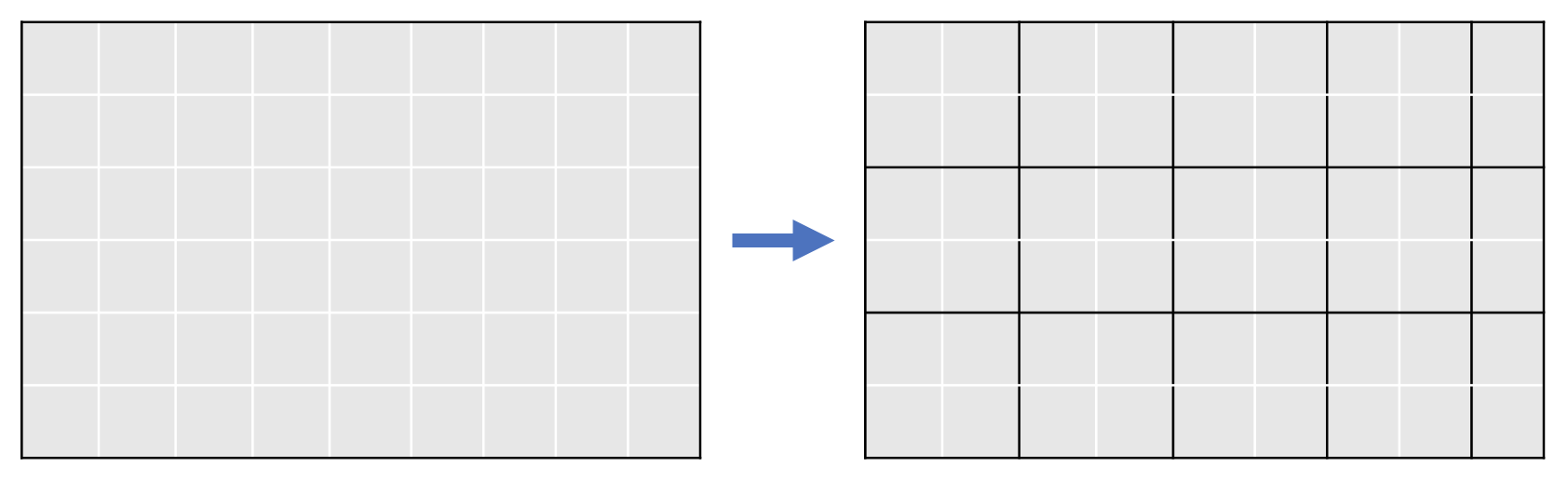}
    \caption{Picture of merging (left) and blocking (right).}
    \label{fig:merging-blocking}
\end{figure}

Merging small dimensions involves setting a maximum dimension and merging consecutive dimensions until its product exceeds the maximum dimension. For example, with maximum dimension $8$, a $10 \times 2 \times 2 \times 4$ dimensional tensor would be reshaped to $10 \times 4 \times 4$ after merging. This is particularly useful for getting rid of redundant (or unit) dimensions. We merge consecutive dimensions in order to ensure that no data movement is required and only \texttt{torch.view} is necessary to reshape the tensor. If all dimensions are merged, then Shampoo is applied to a vector, which is equivalent to applying full-matrix AdaGrad.

Blocking takes a tensor and creates multiple sub-tensors with a given block size $b$. For example, for a second-order tensor (or matrix) $W \in \mathbb{R}^{m \times n}$, we may block the matrix as:
\begin{equation*}
    W = \begin{bmatrix}
    W_{1,1} & W_{1, 2} & ... & W_{1, k_n} \\
    W_{2, 1} & W_{2, 2} & ... & W_{2, k_n} \\
    \vdots & \vdots & \ddots & \vdots \\
    W_{k_m, 1} & W_{k_m, 2} & ... & W_{k_m, k_n}
    \end{bmatrix}
\end{equation*}
where $k_m = \lceil m / b \rceil$ and $k_n = \lceil n / b \rceil$. Note that one can block such that $W_{i, j}$ are all similar in size (which is not necessarily $b \times b$) or such that $W_{i, j} \in \mathbb{R}^{b \times b}$ for $i = 1, ..., k_m - 1$ and $j = 1, ..., k_n - 1$. In our implementation, we opt for the latter in order to best exploit the GPU's capabilities. 

Shampoo is then applied to each block $W_{i, j}$. Note that this also corresponds to partitioning the factors for $W$ into smaller blocks, i.e., if $L$ and $R$ correspond to the left and right preconditioner factors for $W$, then:
\begin{equation*}
    L^{1/2} \otimes R^{1/2} \mapsto 
    P_{\pi}^T \begin{bmatrix}
    L_{1, 1}^{1/2} \otimes R_{1, 1}^{1/2} & 0 & ... & 0 \\
    0 & L_{1, 2}^{1/2} \otimes R_{1, 2}^{1/2} & ... & 0 \\
    0 & 0 & \ddots & 0 \\
    0 & 0 & ... & L_{l, k}^{1/2} \otimes L_{l, k}^{1/2}
    \end{bmatrix} P_{\pi}.
\end{equation*}
where $P_{\pi}$ is a permutation matrix that maps $w = \vect(W)^T$ to 
$$w_{\pi} = (\vect(W_{1, 1})^T, \vect(W_{1, 2})^T, ..., \vect(W_{l, k})^T)^T = P_{\pi} w.$$
We use the same block size hyperparameter, called \texttt{max\_preconditioner\_dim} in our implementation, for both merging and blocking. Merging and blocking therefore has a multi-faceted impact on model quality, memory, and performance. We summarize the impact of modifying the block size on each of these aspects below:

\begin{enumerate}
    \item \textbf{Model Quality}: As the block size increases, we expect the model quality to improve because our approximation will remove  dimensions and eventually use full-matrix AdaGrad for that parameter. This incentivizes using large block sizes as long as the factor matrices fit in memory and the algorithm's performance is not too slow. 
    \item \textbf{Memory}: For a general order-$\omega$ tensor $W \in \mathbb{R}^{d_1 \times ... \times d_\omega}$ and block size $b$ that divides $d_1, ..., d_{\omega}$, the total memory cost of blocked Shampoo is $\frac{2 \omega}{b^{\omega - 2}} \prod_{i = 1}^\omega d_i$. The factor $2$ arises because we have to store both the factor matrices and their root inverses. Note that if $\omega < 2$, then as $b$ increases, the memory cost increases. However, if $\omega > 2$, then as $b$ increases, the memory cost decreases. In the matrix case ($\omega = 2$), blocked Shampoo has constant memory cost $4 d_1 d_2$.
    \item \textbf{Performance}: Using too small of a block size can lead to high latency from increased GPU/CUDA kernel launch overheads and reduced compute efficiency. On the other hand, using too large of a block size results in large factor matrices that are costly to invert. Therefore, performance is optimized by a set of block sizes that trade off these two extremes. In our experience, using a block size $b \in \{1024, 2048, 4096, 8192\}$ is ideal for performance.
\end{enumerate}

\subsubsection{Diagonal AdaGrad Preconditioner}

Alternatively, we provide the option to use the standard diagonal AdaGrad, RMSProp, or Adam preconditioner in place of Shampoo if any of the dimensions exceeds \texttt{max\_preconditioner\_dim}. This reduces the memory cost to $d_1 d_2$ for the matrix case and $\prod_{i = 1}^{\omega} d_i$ for the general order-$\omega$ tensor case, and offers the same performance as diagonal adaptive gradient methods. In general, we expect this approach to yield model accuracies between blocked Shampoo and diagonal Shampoo.

\subsubsection{Diagonal Shampoo Preconditioner} 

Lastly, we can also diagonalize each factor matrix for dimensions larger than \texttt{max\_preconditioner\_dim}. In the two-dimensional case, this reduces to using $\tilde{L}_t = \matdiag(L_t)$ and $\tilde{R}_t = \matdiag(R_t)$. Note that this reduces the memory cost to $d_1 + d_2$ for the matrix case and $\sum_{i = 1}^{\omega} d_i$ for the general tensor case. Since the matrix is diagonal, it is not necessary to store the root inverse matrices. This approximation may be useful for very large tensors, such as embedding tables, but yields a worse approximation than diagonal AdaGrad if all dimensions are diagonalized. Diagonal Shampoo shares a close relationship with AdaFactor \cite{shazeer2018adafactor} and row-wise AdaGrad \cite{gupta2014training, mudigere2022software}; see Appendix \ref{app: relationships} for more details. 

\subsection{Periodic Root Inverse Computation}

Since the most expensive computation is the root inverse computation, one natural way of reducing the overall wall-clock time of each iteration is to only \textit{periodically} compute the matrix root inverse of the factor matrices, similar to \cite{anil2021scalable}. This is controlled by the \texttt{precondition\_frequency} hyperparameter. This speedup comes at the cost of using stale root inverse matrices, which can slow convergence and impact the final model quality achieved by the optimizer. 

Staleness can particularly have a detrimental impact on convergence at the beginning of training, when the preconditioners are less stable. For this reason, we also incorporate a hyperparameter \texttt{start\_preconditioning\_step} for delaying Shampoo preconditioning. Prior to iteration \texttt{start\_preconditioning\_step}, Distributed Shampoo will take steps using the grafted method before switching to Shampoo preconditioning (with grafting).

Both of these optimizations are consistent with \cite{anil2021scalable}. However, because we are primarily focused on supporting hardware architectures that support higher precision, we \textit{do not} offload the matrix root inverse computation to CPU.

\subsection{Comparison with JAX Implementation for TPU/CPU Architectures}

While the core algorithm and some of the performance optimizations such as merging, blocking, and the periodic computation of the matrix root inverses are shared across our PyTorch implementation and the JAX/OPTAX implementation \cite{anil2021scalable}, key framework (PyTorch vs JAX/OPTAX) and hardware architecture (homogeneous GPU and CPU architectures vs heterogeneous TPU/CPU architectures) differences lead to some critical differences between these two implementations. We discuss these differences below.

\subsubsection{CPU Offloading}

Since both GPU and CPU natively support FP32 and FP64 computation, our implementation does not offload the root inverse computation onto CPU to avoid unnecessary data movement. This contrasts with the JAX implementation for TPU/CPU architectures, which do not offer FP32 or FP64 support, and therefore makes offloading a necessity \cite{anil2021scalable}.

This specifically impacts the staleness of the root inverse matrices. While the matrix root inverses in the PyTorch implementation will be stale for up to \texttt{precondition\_frequency} iterations (before all root inverse matrices are re-computed based on the updated factor matrices), the JAX implementation will be stale for $2 \times \texttt{precondition\_frequency}$, as its offloading onto CPU and overlapping of the matrix root inverse computation on CPU with Shampoo's (stale) preconditioned updates on TPU creates two periods of staleness. 

\subsubsection{Compiler vs Hand-Optimized Kernels and Communications}

Prior to PyTorch 2.0, PyTorch did not offer a compiler that can automatically fuse operators using \texttt{torch.compile}. JAX, on the other hand, relies on XLA to compile and run NumPy programs on GPU and TPU. As a result, our PyTorch implementation requires the use of hand-optimized kernels in order to run efficiently.

One example is the use of PyTorch's \texttt{\_for\_each} operators, which fuse the element-wise operators for each parameter together. Our communications optimizations and distributed buffer instantiation are also explicitly defined, unlike in the JAX implementation. Incorporation of PyTorch 2.0 is left for future work.

\subsubsection{FP32 vs FP64 Default Precision}

Unlike \citet{anil2021scalable}, we have found that using single precision is often sufficient for our workloads, although we provide the option for the user to specify the factor matrix precision through \texttt{preconditioner\_dtype}. To further avoid failure of the eigendecomposition, we have enabled a guarding mechanism as described in Section \ref{sec:guard}.

\subsubsection{Eigendecomposition vs Coupled Inverse Newton}

By default, our implementation uses PyTorch's Hermitian/symmetric eigendecomposition operator \texttt{torch.linalg.eigh}, which internally calls \texttt{CUSOLVER}'s symmetric eigendecomposition solvers. The JAX implementation instead relies on a warm-started coupled inverse Newton, as described in \cite{anil2021scalable}.

\section{Numerical Results}
\label{sec:numerical}

In this section, we provide experimental results for training a ResNet50 model, which contains 25.5M trainable parameters, on the ImageNet-1k dataset~\cite{deng2009imagenet, he2016deep}. We compare our implementation of Distributed Shampoo against the standard baseline on this workload, SGD with Nesterov acceleration. The results demonstrate that Shampoo can provide significant reductions in overall wall-clock time and number of steps compared to a well-tuned Nesterov baseline. These reductions are observed despite the additional FLOPs incurred by Shampoo's update rule.

We concentrate on three sets of experiments:
\begin{enumerate}
    \item Comparing the performance of both methods with a fixed training budget of 90 epochs, which is the standard budget for optimal validation accuracy when training with SGD with Nesterov momentum. 
    \item Comparing the performance of both methods to achieve a given target validation accuracy, without constraining the budget of training epochs. This enables us to observe substantial savings in overall wall-clock time as well as number of epochs to achieve a fixed validation accuracy by Shampoo.\footnote{Note that an experiment with more training epochs is \textit{not equivalent} to merely continuing an experiment with fewer epochs since the learning rate schedule depends directly on the total number of steps in each experiment.}
    \item Evaluating the sensitivity of Shampoo and Nesterov to the choice of base learning rate.
\end{enumerate}
We use the notation \texttt{average} $\left[\frac{\texttt{max}}{\texttt{min}}\right]$, when referring to aggregate results across multiple seeds.

\subsection{Experimental Setup}

We use SGD with Nesterov momentum of $\mu = 0.9$ with a linear warmup-then-cosine learning rate scheduler as the baseline for our experiments. This choice of the learning rate schedule is standard practice in the computer vision community; see Section 5.1 of~\cite{he2019bag}.

To highlight the versatility of Shampoo as an enhancement to existing training pipelines, our experiments use SGD as the grafted method, matching the optimizer choice in the baseline. A comprehensive description of the choice of hyperparameters for both methods are included in Appendix \ref{sec:appx:hyperparameters}.

Much of the additional overhead incurred by Shampoo depends on two factors: (i) the computation of the inverse preconditioners, and (ii) the preconditioning of the observed gradient to obtain the Shampoo search direction. We can control (i) by amortizing the cost of the preconditioner computation across multiple iterations, using \textit{stale} preconditioners in-between updates. Moreover, the cost of the preconditioner inversion will also be governed by the maximum allowed preconditioner dimension. Notably, in the ImageNet task, Shampoo can operate effectively with a preconditioner update frequency of 50 steps, and a maximum preconditioner dimension of 2048 (beyond which blocking is applied, see \S\ref{sec:large tensors}) with minimal overhead. All experiments below use these settings. Appendix \ref{sec:appx:performance_ablations} contains ablations on the choices of the \texttt{max\_preconditioner\_dim} and \texttt{precondition\_frequency} hyperparameters.

\subsection{Results and Analysis}

\subsubsection{Fixed Epoch Budget}

\begin{figure}
\footnotesize
    \includegraphics[scale=0.6]{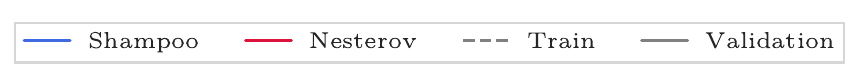} \\
    \includegraphics[scale=0.6, trim=0 30pt 0 0, clip]{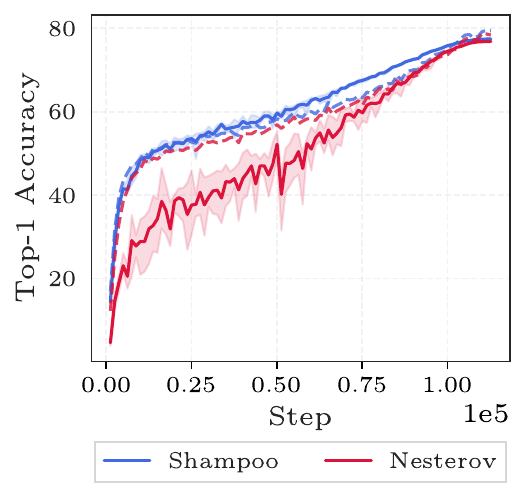}
    \hspace{0.45cm}%
    \includegraphics[scale=0.6, trim=0 30pt 0 0, clip]{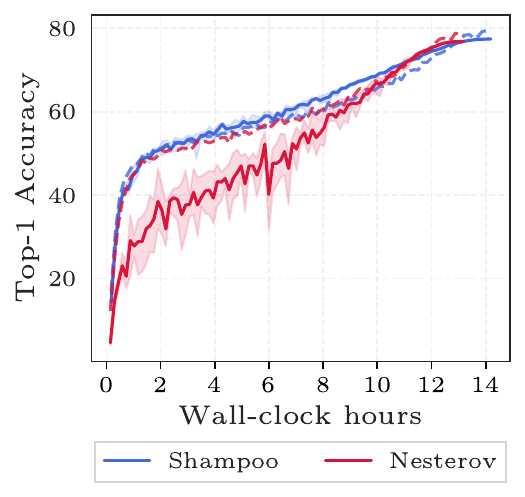} \\
    \includegraphics[scale=0.6, trim=0 30pt 0 0, clip]{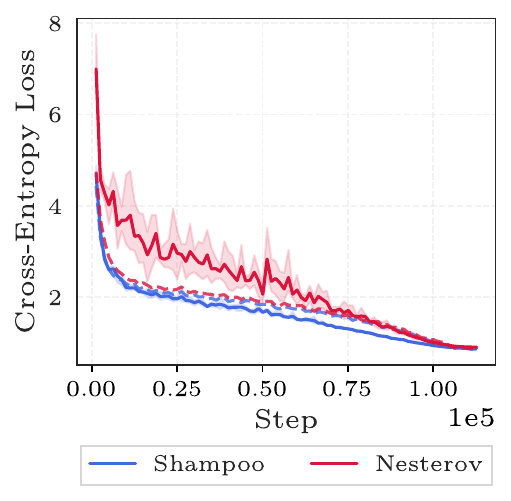}
    \hspace{0.6cm}%
    \includegraphics[scale=0.6, trim=0 30pt 0 0, clip]{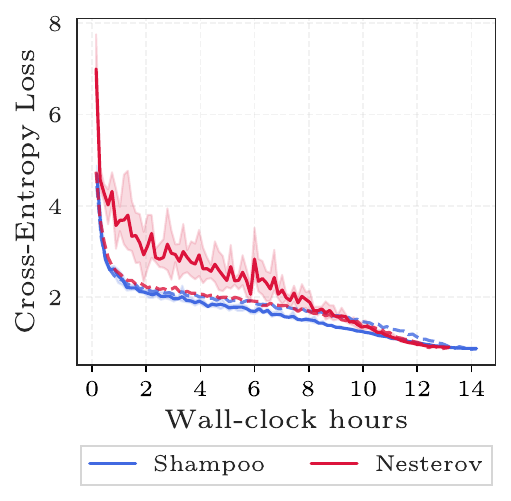}
\caption{Top-1 accuracy and cross-entropy loss on the ImageNet dataset. Shaded regions show min-max bounds across 5 seeds. Bounds on the training metrics are omitted for readability. \textbf{All throughout training, the iterates visited by Shampoo achieve better validation accuracy and less variability than those of the Nesterov baseline.}}
\label{fig:main_comparison_90_epochs}
\end{figure}

Figure \ref{fig:main_comparison_90_epochs} shows top-1 accuracy and cross-entropy loss metrics under a fixed training budget of 90 epochs. Shampoo consistently achieves better validation accuracy than Nesterov, at $77.44\% \left[\frac{77.58}{77.36}\right]$ vs $76.85 \% \left[\frac{76.93}{76.78}\right]$. The improvements in the validation metrics by Shampoo can be observed throughout training with respect to both steps and wall-clock time. Notably, the accuracy and loss measurements for Shampoo in the validation set are significantly less volatile than those of the Nesterov runs. This reduction in variability is desirable since it indicates more robust behavior of the optimizer, and makes individual runs more informative of the method's general performance. Despite the increased complexity of the update rule, using the amortization scheme above, Shampoo only incurs an 8\% wall-clock time overhead to complete 90 epochs.

We want to emphasize that, in these experiments, Shampoo is run using \textit{exactly the same hyperparameter values} as in the Nesterov training recipe with grafting from SGD (including the number of epochs, base learning rate, learning rate scheduler, and weight decay), and that these hyperparameters were specifically tuned for Nesterov. The only hyperparameter tuning we performed for Shampoo were the ablations on the hyperparameters \texttt{max\_preconditioner\_dim} and \texttt{precondition\_frequency} (see App. \ref{sec:appx:performance_ablations}) to determine an acceptable trade-off between preconditioner staleness and computational overhead.

There is also a qualitative difference in the generalization gap induced by the different optimizers throughout training. Interestingly, Shampoo produces models whose accuracy and loss track each other more closely between training and validation compared to Nesterov. This disparity is most evident at the beginning of training and is reduced in later stages. It may be precisely this closer tracking of the validation metrics that underlies Shampoo's improvements over SGD. An understanding of Shampoo's ``implicit regularization'' is left for future research.

\subsubsection{Epoch Budget Ablation}

Figure \ref{fig:epoch_sweep} displays the results of experiments with a changing training budget, between 40 and 90 epochs. Across all epoch budgets, Shampoo displays a similar reduction in the volatility of the validation metrics as discussed above, and reliably achieves better performance than Nesterov.

Figures \ref{fig:epoch_sweep} (c) and (d) show the speed-ups in terms of number of steps and wall-clock time required for Shampoo to achieve the same validation accuracy as when training with Nesterov for 90 epochs. Shampoo required 1.5x \textit{fewer} steps and 1.35x \textit{less} time than the Nesterov baseline. Similarly, Figures \ref{fig:epoch_sweep} (g) and (h) demonstrate that Shampoo yields speed-ups of 1.8x in terms of steps and 1.69x in terms of wall-clock time, to achieve the same validation loss as when training with Nesterov for 90 epochs.

\begin{figure}
\footnotesize
    \includegraphics[scale=0.6]{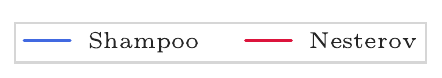} \\
    \includegraphics[scale=0.6, trim=0 30pt 0 0, clip]{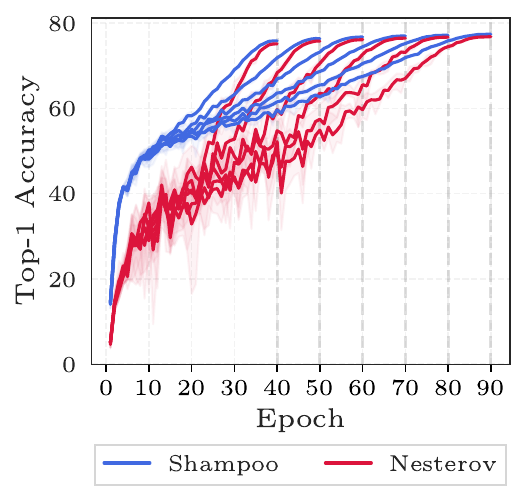}
    \hspace{0.5cm}%
    \includegraphics[scale=0.6, trim=0 30pt 0 0, clip]{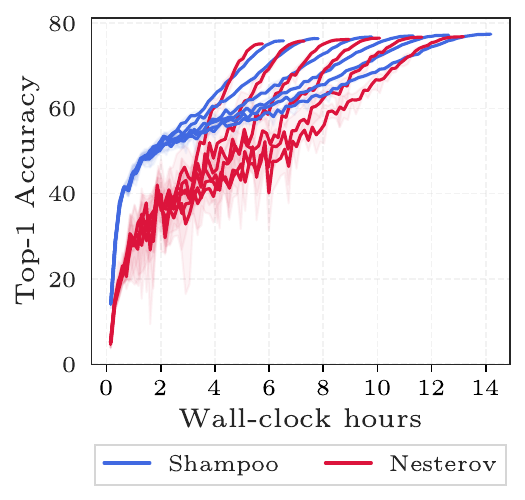} \\
    \includegraphics[scale=0.6, trim=0 30pt 0 0, clip]{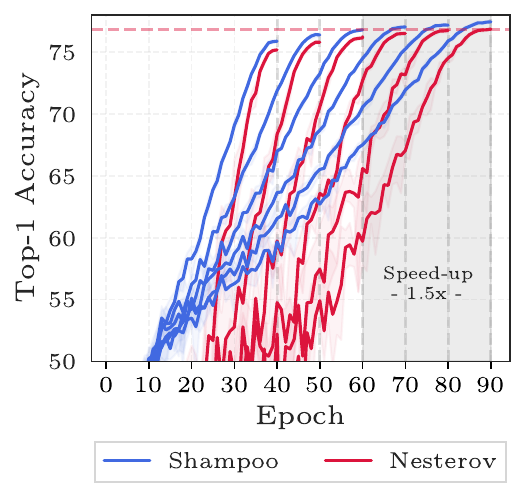}
    \hspace{0.5cm}%
    \includegraphics[scale=0.6, trim=0 30pt 0 0, clip]{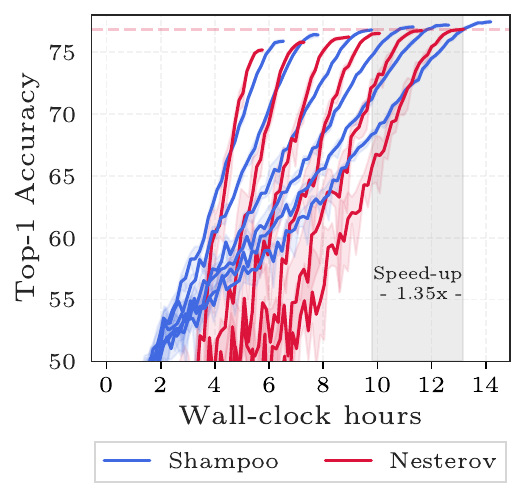} \\
    \includegraphics[scale=0.6, trim=0 30pt 0 0, clip]{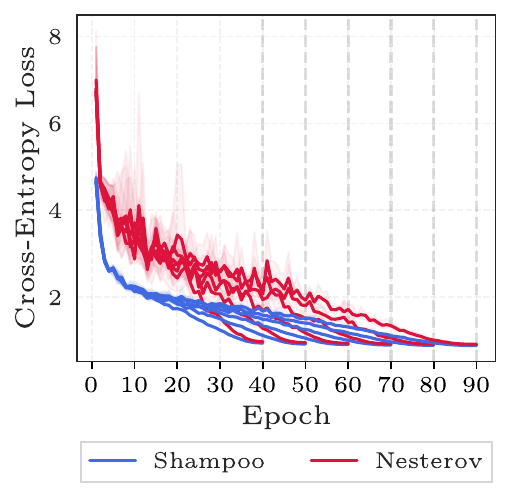} 
    \hspace{0.7cm}%
    \includegraphics[scale=0.6, trim=0 30pt 0 0, clip]{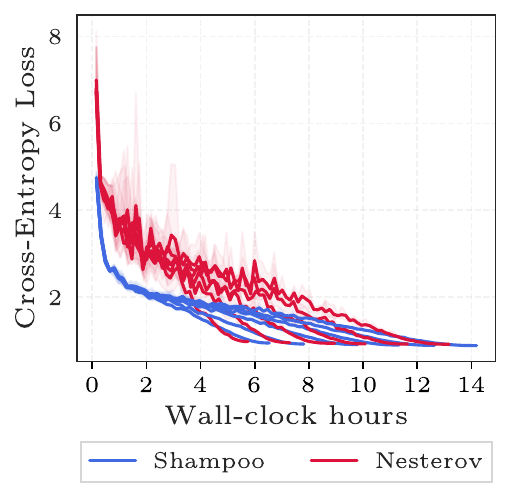}\\
    \includegraphics[scale=0.6, trim=0 30pt 0 0, clip]{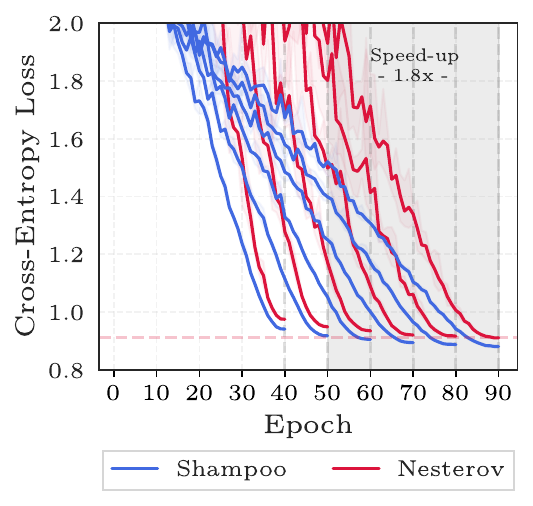} 
    \hspace{0.5cm}%
    \includegraphics[scale=0.6, trim=0 30pt 0 0, clip]{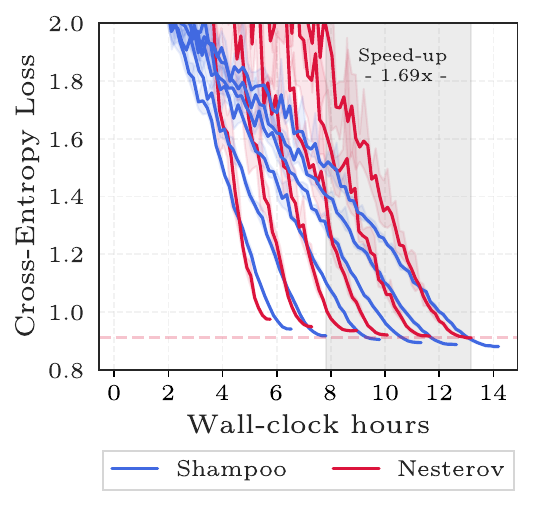}
\caption{Ablation on achieved validation performance with a changing budget of training epochs. Second and fourth rows correspond to a detail view of the first and third rows, respectively. Shaded regions show min-max bounds across 5 seeds. Training metrics are omitted for readability. \textbf{60-epoch Shampoo delivers a 1.35x reduction in terms of the wall-clock time required to achieve the validation accuracy of 90-epoch SGD.} This corresponds to a 1.5x step-wise reduction.}
\label{fig:epoch_sweep}
\end{figure}

\subsubsection{Sensitivity to the Learning Rate}

\begin{figure}
\footnotesize
    \includegraphics[scale=0.6]{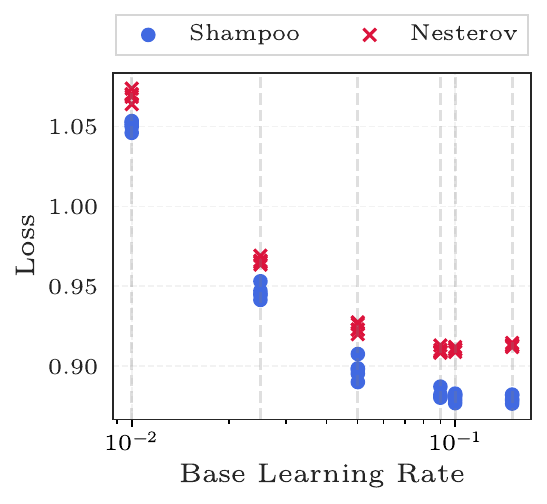}
    \hspace{0.5cm}
    \includegraphics[scale=0.6]{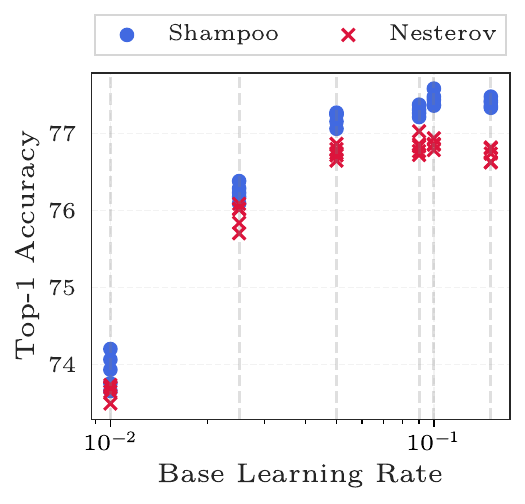}
\caption{Sensitivity of Nesterov and Shampoo to changes in the base learning rate. Plot shows metrics on the validation set, with markers indicating different seeds. \textbf{Shampoo achieves consistently better loss and accuracy than Nesterov across a wide range of choices of the base learning rate.} However, the performance of both methods is still heavily influenced by this hyperparameter.}
\label{fig:lr_sensitivity}
\end{figure}

    Figure \ref{fig:lr_sensitivity} displays the validation loss and accuracy for both methods trained using different values for the base learning rate. These experiments use a fixed budget of 90 epochs. Ideally, adaptive methods should provide better robustness to choices of certain hyperparameters like the learning rate. As seen in Figure \ref{fig:lr_sensitivity}, the performance of Shampoo is reliably superior to that of Nesterov over the tested range of learning rates. Nevertheless, different values of the learning rate lead to significant performance changes for both methods. These results indicate that, while Shampoo is a promising preconditioned gradient technique, further research is required to improve the method's robustness to hyperparameter choices.

\section{Related Work}

There has been extensive research on the design of preconditioned stochastic gradient algorithms for training deep neural networks. The stochastic gradient method initially enabled the training of machine learning models on large-scale datasets via stochastic approximation \cite{robbins1951stochastic, bottou1991stochastic, bottou2010large, lecun1998gradient}. Subsequent work focused on improving the stochastic gradient method by incorporating momentum \cite{rumelhart1986learning} and iterate averaging techniques \cite{polyak1992acceleration}. Multiple directions were subsequently pursued to improve upon the stochastic gradient method by incorporating preconditioning, both for general stochastic optimization and online convex optimization. \citet{bottou2018optimization} provides a good review of such methods. We elaborate on the three main directions next.

The first group of methods extend deterministic smooth optimization methods that utilize curvature information, such as Newton and quasi-Newton methods, for stochastic optimization. This class of methods has typically relied on diagonal approximations \cite{bordes2009sgd}, sub-sampling or sketching the Hessian \cite{xu2020newton, xu2020second, berahas2020investigation, pilanci2017newton, xu2016sub}, ensuring consistency when evaluating gradient differences \cite{berahas2016multi, schraudolph2007stochastic}, re-sampling correction pairs \cite{berahas2022quasi}, and using adaptive sampling or progressive batching, i.e., increasing the batch size via adaptive mechanisms \cite{bollapragada2018adaptive, bollapragada2018progressive, devarakonda2017adabatch}. These methods were investigated in the context of deep learning by \cite{martens2010deep, martens2012training, martens2011learning}. Most recently, Kronecker product approximations have also been applied to quasi-Newton methods through the development of the K-BFGS algorithm \cite{goldfarb2020practical}.

The second group of methods extend the natural gradient method \cite{amari1998natural} for training neural networks. The natural gradient method has been shown to be equivalent to the generalized Gauss-Newton method \cite{martens2020new, kunstner2019limitations} and has been further analyzed in \citet{zhang2019fast}. K-FAC was the first method to propose using block-diagonal preconditioners with Kronecker product approximations for training neural networks \cite{martens2015optimizing}. This method, which was built on top of the natural gradient method, was extended to different layer types and distributed setups; see \cite{grosse2016kronecker, ba2017distributed, martens2018kronecker, george2018fast}. Alternatives that extend the natural gradient method such as TNT have also been proposed in \citet{ren2021tensor}.

Lastly, a class of preconditioned gradient methods, known as adaptive gradient methods, preconditioned the (stochastic) gradient by the accumulation of outer products of the observed gradients; see \cite{duchi2011adaptive}. Originally designed for online (possibly nonsmooth) convex optimization, the diagonally approximated forms of these methods have gained wide traction in the deep learning community. Subsequent works extended these methods by incorporating other heuristics, such as gradient filtering, decoupled weight decay, and block re-scalings; see \cite{kingma2014adam, loshchilov2017decoupled, you2019large, you2018imagenet}. The work on (Distributed) Shampoo exploited Kronecker product approximations akin to K-FAC to design specialized adaptive gradient method for training deep networks \cite{gupta2018shampoo, anil2021scalable}. 

In terms of performance optimizations, our implementation shares the most similarities with DeepSpeed's ZeRO-1 optimizer, which shards the optimizer states to optimize memory for large-scale models \cite{rajbhandari2020zero, rajbhandari2021zero}. Our performance optimizations can also be interpreted as using solely the optimizer portion of PyTorch's Fully Sharded Data Parallel (FSDP) and Hybrid Sharded Data Parallel (HSDP) \cite{zhao2023pytorch}.

\begin{acks}
We thank Rohan Anil and Vineet Gupta for the original development of the Distributed Shampoo algorithm, its implementation in JAX, and their suggestions. We also thank Simon Lacoste-Julien for his support of this work. 

We thank Adnan Aziz, Malay Bag, Pavan Balaji, Xiao Cai, Shuo Chang, Nikan Chavoshi, Wenlin Chen, Xi Chen, Ching-Hsiang Chu, Weiwei Chu, Aaron Defazio, Alban Desmaison, Quentin Duval, Assaf Eisenman, Zhuobo Feng, Leon Gao, Andrew Gu, Yizi Gu, Yuchen Hao, Tao Hu, Yusuo Hu, Yuxi Hu,  Jianyu Huang, Minhui Huang, Shakti Kumar, Ming Liang, Mark Kim-Mulgrew, Guna Lakshminarayanan, Ming Liang, Wanchao Liang, Xing Liu, Ying Liu, Liang Luo, Yinbin Ma, Wenguang Mao, Maxim Naumov, Jongsoo Park, Yi Ren, Ke Sang, Xinyue Shen, Min Si, Dennis van der Staay, Ping Tak Peter Tang, Fei Tian, James Tian, Andrew Tulloch, Sanjay Vishwakarma, Ellie Wen, Lin Xiao, Shawn Xu, Ye Wang, Chunzhi Yang, Jiyan Yang, Lin Yang, Chunxing Yin, Christina You, Jiaqi Zhai, Susan Zhang, Zhang Zhang, Gedi Zhou, and Wang Zhou for their excellent internal contributions, support, feedback, and backing of this work.
\end{acks}

\bibliographystyle{ACM-Reference-Format}
\bibliography{bibl}

\appendix

\section{Motivation for Kronecker Product Approximations}
\label{app: kronecker products}

Here, we provide a complete description of the motivation for using a Kronecker product approximation for a matrix parameter arising from a fully-connected layer when training a multi-layer perceptron. Recall that the problem of neural network training is posed as
\begin{equation}
    \min_{w \in \mathbb{R}^d} \left\{ f(w) = \mathbb{E}_{(x, y) \sim \mathcal{D}}[\ell(m(w; x); y)] \right\}
\end{equation}
where the multi-layer perceptron model is defined as
\begin{equation}
    m(w; x) = W^{(n)} \sigma( W^{(n - 1)} \sigma( ... \sigma( W^{(1)} x) ... ) ),
\end{equation}
with $w = (\vect(W^{(1)})^T, ..., \vect(W^{(n)})^T)^T$. In order to examine its structure, we would like to derive full-matrix AdaGrad for a single parameter $W^{(i)}$.

For a single datapoint $(x, y) \sim \mathcal{D}$, we can isolate the problem for parameter $W^{(i)}$ by defining the function
\begin{equation}
    f^{(i)}(W) = \phi^{(i)}(W a^{(i - 1)}).
\end{equation}
Here, the activation $a^{(i - 1)}$ before layer $i$ and the function $\phi^{(i)} : \mathbb{R}^{d_i} \rightarrow \mathbb{R}$ are defined as: 
\begin{align}
    a^{(i - 1)} & = \sigma( W^{(i - 1)} ... \sigma(W^{(2)} \sigma( W^{(1)} x) ) ... ) \\
    \phi^{(i)}(z) & = \ell(W^{(n)} \sigma( W^{(n - 1)} \sigma( ... \sigma(z) ... )), y).
\end{align}
Note that $a^{(i - 1)}$ has an implicit dependence on $x$ and $\phi^{(i)}$ has an implicit dependence on $y$. This structure also holds for simpler machine learning models, such as multi-class logistic regression. The gradient in both matrix and vector form for a single sample may therefore be derived as:
\begin{align}
    \nabla f^{(i)}(W^{(i)}) & = \nabla \phi(z)|_{z = W^{(i)} a^{(i - 1)}} (a^{(i - 1)})^T \\ 
    \vect(\nabla f^{(i)}(W^{(i)})) & = \nabla \phi(z)|_{z = W^{(i)} a^{(i - 1)}} \otimes a^{(i - 1)}.
\end{align}

Let the subscript $s$ denote the gradient, function, or activation at iteration $s$. We can therefore expand the definition of full-matrix AdaGrad as
\begin{align*}
    A_t^{(i)} & = \sum_{s = 0}^t \vect(\nabla f_s^{(i)}(W_s^{(i)})) \vect(\nabla f_s^{(i)}(W_s^{(i)}))^T \\
    & = \sum_{s = 0}^t (\nabla \phi_s^{(i)}(z)|_{z = W^{(i)}_s a^{(i - 1)}_s} \otimes a^{(i - 1)}_s) (\nabla \phi_s^{(i)}(z)|_{z = W^{(i)}_s a^{(i - 1)}_s} \otimes a^{(i - 1)}_s)^T \\
    & = \sum_{s = 0}^t (\nabla \phi^{(i)}_s(z)|_{z = W^{(i)}_s a^{(i - 1)}_s} (\nabla \phi^{(i)}_s(z)|_{z = W^{(i)}_s a^{(i - 1)}_s})^T) \otimes (a^{(i - 1)}_s (a^{(i - 1)}_s)^T).
\end{align*}
where $(x_s, y_s) \sim \mathcal{D}$ is sampled at iteration $s$. So $A_t$ is in fact a sum of Kronecker products.

\section{Per-Parameter Relationship with Row-Wise AdaGrad and AdaFactor}
\label{app: relationships}

Row-wise AdaGrad and AdaFactor are two optimizers with sublinear memory cost designed for optimizing embedding tables and large language models, respectively \cite{mudigere2022software, shazeer2018adafactor, gupta2014training}. We will show that two separate versions of diagonal Shampoo are, in fact, equivalent to both AdaFactor and row-wise AdaGrad when applied to a \textit{single} matrix parameter $W \in \mathbb{R}^{m \times n}$. (These equivalences will \textit{not hold}, however, for the general multi-parameter case.)

We will rely on the following generalization of the Shampoo update equation \eqref{eq:shampoo update}
\begin{equation} \label{eq:generalized update}
    W_{t + 1} = W_t - \bar{\alpha}_t L_t^{-1/2p} G_t R_t^{-1/2q}
\end{equation}
for $\frac{1}{p} + \frac{1}{q} = 1$ with $p, q > 0$; see \cite{anil2021scalable}.

\subsection{Equivalence to Row-Wise AdaGrad}

Row-wise AdaGrad is an approximation to diagonal AdaGrad designed specifically for training large-scale embedding tables that arise in ranking and recommendation models \cite{mudigere2022software, gupta2014training}. Categorical features are often encoded using a one-hot or multi-hot encoding $x \in \mathbb{R}^m$ where each component of the encoding takes values in $\{0, 1\}$:
\begin{equation}
    x_i = 
    \begin{cases}
        1 & \mbox{ if category $i$ is active,} \\
        0 & \mbox{ otherwise.}
    \end{cases}
\end{equation}
Note that $m$ corresponds to the total number of categories. In the case of ranking and recommendation models, the categories are normally hashed via a modulo function, and therefore $m$ will correspond to the number of hash buckets \cite{chen2015compressing, shi2020compositional, naumov2019deep}. 

Embedding tables map individual or small subsets of categories to dense \textit{embedding vector} representations in some lower-dimensional space. In the simplest case, an embedding table $E \in \mathbb{R}^{m \times n}$ maps a categorical feature (represented by a one-hot or multi-hot encoding) $x \in \mathbb{R}^m$ to an embedding vector $e \in \mathbb{R}^n$ of dimension $n$ by computing
\begin{equation}
    e = E^T x.
\end{equation}
In the case of one-hot encodings, this corresponds to a single-row lookup. In the multi-hot encoding setting, this corresponds to performing \textit{sum pooling} over all active rows. Our corresponding optimization problem is therefore:
\begin{equation}
    \min_{E \in \mathbb{R}^{m \times n}} \mathbb{E}_{x \sim \mathcal{X}}[f(E; x)] = \mathbb{E}_{x \sim \mathcal{X}}[\phi(E^T x)]
\end{equation}
where $\phi: \mathbb{R}^n \rightarrow \mathbb{R}$. Note that the gradient for a single sample can be derived as:
\begin{equation}
    \nabla f(E; x) = x (\nabla \phi(y)|_{y = E^T x})^T.
\end{equation}

Row-wise AdaGrad approximates diagonal AdaGrad by only storing a single optimizer state for each row by averaging over the squared gradients of each row of the embedding table. In particular, let the row-wise AdaGrad optimizer state be denoted by $v \in \mathbb{R}^m$ with $v_{-1} = \epsilon 1_m$ for $\epsilon > 0$. Then row-wise AdaGrad performs the update:
\begin{align}
    v_t & = v_{t - 1} + \frac{\sum_{j = 1}^n [G_t^{\odot 2}]_{:, j}}{n} \\
    E_{t + 1} & = E_t - \alpha_t G_t / \sqrt{v_t 1_{m}^T}.
\end{align}
Note that this can be re-written as:
\begin{align}
    V_t & = V_{t - 1} + \frac{\matdiag(G_t G_t^T)}{n} \\
    E_{t + 1} & = E_t - \alpha_t V_t^{-1/2} G_t
\end{align}
with $V_{-1} = \epsilon I_m$. With some slight re-writing, we obtain:
\begin{align}
    \tilde{L}_t & = \tilde{L}_{t - 1} + \matdiag(G_t G_t^T) \\
    E_{t + 1} & = E_t - \bar{\alpha}_t \tilde{L}_t^{-1/2} G_t,
\end{align}
where $L_{-1} = \bar{\epsilon} I_m$ with $\bar{\epsilon} = \epsilon / n$ and $\bar{\alpha}_t = \alpha_t / n$. Note that this is precisely diagonal Shampoo (with a modified learning rate and epsilon) with update formula \eqref{eq:generalized update} with $p = 1$ and $q = \infty$! 

\subsection{Relationship with AdaFactor}

AdaFactor is a sublinear memory adaptive gradient method designed for training large language models \cite{shazeer2018adafactor}. It approximates diagonal AdaGrad for matrices by using a rank-one approximation. By observing that all entries of AdaGrad are non-negative, one can minimize the I-divergence between diagonal AdaGrad and its rank-one approximation and obtain a closed form solution expressed in terms of row sums and column sums, which are linear functions of the second moment. 

In particular, suppose we are interested in minimizing the matrix function
\begin{equation*}
    \min_{W \in \mathbb{R}^{m \times n}} \mathbb{E}_{x \sim \mathcal{X}}[f(W; x)] = \mathbb{E}_{x \sim \mathcal{X}}[\phi(W x)],
\end{equation*}
where $W \in \mathbb{R}^{m \times n}$, $x \in \mathbb{R}^n$ is sampled from some underlying probability distribution $\mathcal{X}$, and $\phi: \mathbb{R}^m \rightarrow \mathbb{R}$. Then AdaFactor (ignoring bias correction) will store two vectors $r \in \mathbb{R}^m$ and $c \in \mathbb{R}^n$ for the row and column sums, respectively, and update the parameters as:
\begin{align}
    r_t & = \beta_2 r_{t - 1} + (1 - \beta_2) [G_t^{\odot 2}] 1_n \\
    c_t & = \beta_2 c_{t - 1} + (1 - \beta_2) [G_t^{\odot 2}]^T 1_m \\
    \hat{A}_t & = r_t c_t^T / 1_m^T r_t \\
    W_{t + 1} & = W_t - \alpha_t G_t / (\sqrt{\hat{A}_t} + \epsilon 1_m 1_n^T).
\end{align}
Note that this can be re-written as
\begin{align}
    \tilde{L}_t & = \beta_2 \tilde{L}_{t - 1} + (1 - \beta_2) \matdiag(G_t G_t^T) \\
    \tilde{R}_t & = \beta_2 \tilde{R}_{t - 1} + (1 - \beta_2) \matdiag(G_t^T G_t) \\
    W_{t + 1} & = W_t - \bar{\alpha}_t \tilde{L}_t^{-1/2} G_t \tilde{R}_t^{-1/2} 
\end{align}
with $\bar{\alpha}_t = \sqrt{1_m^T L_t 1_m} \alpha_t$. This shares the same functional form as diagonal Shampoo (with exponential moving averaging of the factor matrices), except for a key difference with the exponent, where AdaFactor uses $-1/2$ as opposed to $-1/4$ with Shampoo. 

\section{Proofs}
\label{app: proofs}

For completeness, we present the algorithms and theorems with proof below. We will focus on the iterate-dependent case for generality. Recall that the momentum method \eqref{eq:momentum1}-\eqref{eq:momentum2} can be vectorized and written as the iteration:
\begin{align}
    m_t & = \mu_t m_{t - 1} + p_t(w_t) \label{eq:full-momentum1}\\
    w_{t + 1} & = w_t - \alpha_t m_t \label{eq:full-momentum2}
\end{align}
for momentum hyperparameter $\mu_t > 0$ and learning rate $\alpha_t > 0$. Similarly, Nesterov acceleration can be written as
\begin{align}
    m_t & = \mu_{t - 1} m_{t - 1} + p_t(w_t) \label{eq:full-nesterov1} \\
    w_{t + 1} & = w_t - \alpha_t (\mu_t m_t + p_t(w_t)). \label{eq:full-nesterov2}
\end{align}
This generalizes the equivalence for SGD with momentum observed in \citet{defazio2020momentum}.

\begin{theorem}
The momentum method defined in \eqref{eq:full-momentum1}-\eqref{eq:full-momentum2} is equivalent to the heavy ball iteration
\begin{equation} \label{eq:heavyball}
    w_{t + 1} = w_t - \alpha_t p_t(w_t) + \delta_t (w_t - w_{t - 1})
\end{equation}
and the stochastic primal iterate averaging iteration
\begin{align}
    z_{t + 1} & = z_t - \eta_t p_t(w_t) \label{eq:full-iteravg1}\\
    w_{t + 1} & = (1 - c_t) w_t + c_t z_{t + 1} \label{eq:full-iteravg2}
\end{align}
with $c_t = \frac{\gamma_t}{\gamma_t + 1} \in [0, 1)$ for some $\gamma_t > 0$, $\delta_t = \frac{\alpha_{t - 1}}{\alpha_t} \mu_t = \frac{\gamma_{t - 1}}{\gamma_t + 1}$, and $\alpha_t = \frac{\eta_t}{\gamma_t + 1} = (1 - c_t) \eta_t$. In particular, if the hyperparameters are fixed, i.e., $\alpha_t \equiv \alpha$, $\mu_t \equiv \mu$, $c_t \equiv c$, $\eta_t \equiv \eta$, and $\delta_t \equiv \delta$, then $\alpha = (1 - c) \eta$ and $\delta = \mu = c$.
\end{theorem}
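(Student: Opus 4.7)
The plan is to prove the three-way equivalence by showing that both the momentum iteration \eqref{eq:full-momentum1}--\eqref{eq:full-momentum2} and the stochastic primal iterate averaging iteration \eqref{eq:full-iteravg1}--\eqref{eq:full-iteravg2} can be rewritten, via elementary substitutions, as the heavy ball recursion \eqref{eq:heavyball} under the stated hyperparameter identifications. Equivalence of \eqref{eq:full-momentum1}--\eqref{eq:full-momentum2} with \eqref{eq:full-iteravg1}--\eqref{eq:full-iteravg2} then follows by transitivity through the heavy ball form, and the constant-hyperparameter specialization drops out at the end.

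First, I would derive heavy ball from momentum. The momentum rule $w_{t+1} = w_t - \alpha_t m_t$ can be solved for $m_t = (w_t - w_{t+1})/\alpha_t$ and, shifting the index, $m_{t-1} = (w_{t-1} - w_t)/\alpha_{t-1}$. Substituting these into the recurrence $m_t = \mu_t m_{t-1} + p_t(w_t)$ and multiplying through by $-\alpha_t$ yields
\[
w_{t+1} - w_t = -\alpha_t\, p_t(w_t) + \frac{\alpha_t \mu_t}{\alpha_{t-1}} (w_t - w_{t-1}),
\]
which is exactly \eqref{eq:heavyball} with $\delta_t = \alpha_t \mu_t/\alpha_{t-1}$. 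The converse direction proceeds by defining $m_t := (w_t - w_{t+1})/\alpha_t$ from a given heavy ball sequence and verifying, by reverse substitution, that $m_t$ satisfies the momentum recurrence with the corresponding $\mu_t$.

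Next, I would derive heavy ball from primal iterate averaging by eliminating $z_t$. From the averaging step at time $t$, solve for $z_t$ as an affine combination of $w_t$ and $w_{t-1}$, obtaining an expression of the form $z_t - w_t = \kappa_{t-1}(w_t - w_{t-1})$ with $\kappa_{t-1}$ a simple function of $c_{t-1}$. Substituting this, together with $z_{t+1} = z_t - \eta_t p_t(w_t)$, into the next averaging step and collecting terms produces a heavy ball recursion with $\alpha_t$ and $\delta_t$ given by explicit rational expressions in $c_t$, $c_{t-1}$, and $\eta_t$. Plugging in $c_t = \gamma_t/(\gamma_t+1)$ simplifies these to $\alpha_t = \eta_t/(\gamma_t+1)$ and $\delta_t = \gamma_{t-1}/(\gamma_t+1)$, matching the theorem. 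The converse is obtained by defining $z_{t+1}$ via the averaging identity (which is invertible precisely when $c_t \in (0,1)$, equivalently $\gamma_t > 0$) and checking the gradient step on $z$. Chaining the two heavy-ball reductions, with the identification $\mu_t \alpha_t / \alpha_{t-1} = \delta_t = (1-c_t) c_{t-1}/(1-c_{t-1})$, yields momentum $\Longleftrightarrow$ primal averaging. The constant-hyperparameter case then reduces immediately: $\alpha = (1-c)\eta$ and $\delta = (1-c) \cdot c/(1-c) = c$ from the averaging side, combined with $\delta = \mu$ from the momentum side, gives $\mu = \delta = c$.

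The main obstacle is bookkeeping rather than conceptual difficulty: one must handle the index-shifted hyperparameters $\alpha_{t-1}$, $c_{t-1}$, $\gamma_{t-1}$ carefully, verify that the inversion $z_t \leftrightarrow (w_t, w_{t-1})$ is well-defined (i.e., $c_t \in (0,1)$), and handle the base case $t = 0$ with the initializations $m_{-1} = 0$ and an appropriate choice of $z_0$ (typically $z_0 = w_0$) so that the first step of each iteration agrees across all three formulations. Once these edge cases are pinned down, each of the two reductions is a one-line algebraic substitution.
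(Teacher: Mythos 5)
Your proof follows the same strategy as the paper's: show that both the momentum iteration and the primal iterate averaging iteration reduce, by elementary substitution, to the heavy ball recursion \eqref{eq:heavyball}, and then match coefficients to identify $\alpha_t$ and $\delta_t$. One remark: your derived $\delta_t = \alpha_t \mu_t / \alpha_{t-1}$ from the momentum side is in fact the correct identification (substituting $m_{t-1} = -(w_t - w_{t-1})/\alpha_{t-1}$ into $-\mu_t \alpha_t m_{t-1}$ gives $\frac{\mu_t \alpha_t}{\alpha_{t-1}}(w_t - w_{t-1})$), so the theorem's stated $\delta_t = \frac{\alpha_{t-1}}{\alpha_t}\mu_t$ has the ratio inverted --- a typo that cancels in the constant-hyperparameter case, which is why the final conclusion $\delta = \mu = c$ is unaffected.
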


\begin{proof}
We will show that both methods may be written in the form of \eqref{eq:heavyball}. For the momentum equations \eqref{eq:full-momentum1}-\eqref{eq:full-momentum2}, note that if we plug in \eqref{eq:full-momentum1} into \eqref{eq:full-momentum2}, we obtain:
\begin{align*}
    w_{t + 1} & = w_t - \alpha_t p_t(w_t) - \mu_t \alpha_t m_{t - 1} \\
    & = w_t - \alpha_t p_t(w_t) + \delta_t (w_t - w_{t - 1}).
\end{align*}
where the last line follows since $w_t - w_{t - 1} = - \alpha_{t - 1} m_{t - 1}$ and if we choose $\delta_t = \frac{\alpha_{t - 1}}{\alpha_t} \mu_t$.

Now, if we plug in $c_t = \frac{\gamma_t}{\gamma_t + 1}$ into \eqref{eq:full-iteravg1}-\eqref{eq:full-iteravg2}, we obtain
\begin{align*}
    z_{t + 1} & = z_t - \eta_t p_t(w_t) \\
    w_{t + 1} & = \frac{\gamma_t}{\gamma_t + 1} w_t + \frac{1}{\gamma_t + 1} z_{t + 1}.
\end{align*}
Equivalently,
\begin{equation*}
    z_{t + 1} = (\gamma_t + 1) w_{t + 1} - \gamma_t w_t. 
\end{equation*}
Therefore,
\begin{align*}
    w_{t + 1} & = \frac{\gamma_t}{\gamma_t + 1} w_t + \frac{1}{\gamma_t + 1} (z_t - \eta_t p_t(w_t)) \\
    & = \frac{\gamma_t}{\gamma_t + 1} w_t + \frac{1}{\gamma_t + 1} ((\gamma_{t - 1} + 1) w_t - \gamma_{t - 1} w_{t - 1} - \eta_t p_t(w_t)) \\
    & = w_t - \frac{\eta_t}{\gamma_t + 1} p_t(w_t) + \frac{\gamma_{t - 1}}{\gamma_t + 1} (w_t - w_{t - 1}) \\
    & = w_t - \alpha_t p_t(w_t) + \delta_t (w_t - w_{t - 1})
\end{align*}
since $\alpha_t = \frac{\eta_t}{\gamma_t + 1}$ and $\delta_t = \frac{\gamma_{t - 1}}{\gamma_t + 1}$.
\end{proof}

Observe that this equivalence provides some guidelines for choosing the hyperparameters for momentum methods. In particular, if the momentum hyperparameter is fixed to $\mu_t \equiv \mu = 0.9$ (as is typically set in practice), then we should multiply the maximum initial learning rate $\eta_0 \leq \bar{\eta}_0$ by $0.1$ when using the momentum method to prevent divergence, i.e., $\alpha_0 \leq \bar{\alpha}_0 = 0.1 \bar{\eta}_0$. 

Typically, we apply learning rate warmup and/or different learning rate annealing techniques, for example by cutting the learning rate periodically or applying cosine annealing. In order to ensure that we are applying a consistent averaging hyperparameter $c_t \equiv c$, we need to adjust the momentum hyperparameter when the learning rate changes. For example, if $\alpha_t = \zeta \alpha_{t - 1}$, then we should set $\mu_t = \zeta \mu_{t - 1}$. Otherwise, fixing the momentum hyperparameter while decaying the learning rate corresponds to a larger averaging hyperparameter $c_t$.

We can show a similar theorem for the Nesterov acceleration case.

\begin{theorem}
The Nesterov accelerated method defined in \eqref{eq:full-nesterov1}-\eqref{eq:full-nesterov2} is equivalent to the modified heavy ball iteration
\begin{equation} \label{eq:heavyball-nesterov}
    w_{t + 1} = w_t - \alpha_t (p_t(w_t) + \mu_t (p_t(w_t) - p_{t - 1}(w_{t - 1}))) + \delta_t (w_t - w_{t - 1})
\end{equation}
and the modified stochastic primal iterate averaging iteration
\begin{align}
    z_{t + 1} & = z_t - \eta_t (p_t(w_t) + \mu_t (p_t(w_t) - p_{t - 1}(w_{t - 1})) \label{eq:nesterov-iteravg1}\\
    w_{t + 1} & = (1 - c_t) w_t + c_t z_{t + 1} \label{eq:nesterov-iteravg2}
\end{align}
with $c_t = \frac{\gamma_t}{\gamma_t + 1} \in [0, 1)$ for some $\gamma_t > 0$, $\delta_t = \frac{\alpha_{t - 1}}{\alpha_t} \mu_t = \frac{\gamma_{t - 1}}{\gamma_t + 1}$, and $\alpha_t = \frac{\eta_t}{\gamma_t + 1} = (1 - c_t) \eta_t$. In particular, if the hyperparameters are fixed, i.e., $\alpha_t \equiv \alpha$, $\mu_t \equiv \mu$, $c_t \equiv c$, $\eta_t \equiv \eta$, and $\delta_t \equiv \delta$, then $\alpha = (1 - c) \eta$ and $\delta = \mu = c$.
\end{theorem}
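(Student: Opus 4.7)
The plan is to mirror the proof strategy of the first theorem, establishing the two equivalences by showing that both the Nesterov iteration \eqref{eq:full-nesterov1}--\eqref{eq:full-nesterov2} and the modified primal iterate averaging scheme \eqref{eq:nesterov-iteravg1}--\eqref{eq:nesterov-iteravg2} can be rewritten in the common form of the modified heavy ball iteration \eqref{eq:heavyball-nesterov}. Under the stated correspondences $c_t = \gamma_t/(\gamma_t+1)$, $\delta_t = (\alpha_{t-1}/\alpha_t)\mu_t$, and $\alpha_t = \eta_t/(\gamma_t+1)$, this common reduction is enough to conclude pairwise equivalence.

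First I would derive the heavy ball form directly from Nesterov. Substituting \eqref{eq:full-nesterov1} into \eqref{eq:full-nesterov2} yields $w_{t+1} = w_t - \alpha_t(1+\mu_t)p_t(w_t) - \alpha_t\mu_t \mu_{t-1} m_{t-1}$. To eliminate $m_{t-1}$, I would invert the previous step's update $w_t - w_{t-1} = -\alpha_{t-1}(\mu_{t-1}m_{t-1} + p_{t-1}(w_{t-1}))$ to obtain $\mu_{t-1}m_{t-1} = -(w_t - w_{t-1})/\alpha_{t-1} - p_{t-1}(w_{t-1})$. Plugging this in and regrouping terms produces exactly the $-\alpha_t(p_t(w_t) + \mu_t(p_t(w_t)-p_{t-1}(w_{t-1})))$ gradient-difference contribution together with the $\delta_t(w_t - w_{t-1})$ momentum contribution, recovering \eqref{eq:heavyball-nesterov}. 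This step is where the characteristic ``look-ahead'' correction of Nesterov's method appears algebraically, and it is the main obstacle: one must carefully track the extra $p_{t-1}(w_{t-1})$ that pops out of the elimination, since this term is precisely what distinguishes the Nesterov heavy-ball form from the plain momentum form of the previous theorem.

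Second I would handle the primal iterate averaging equivalence, which is essentially a cosmetic modification of the corresponding argument in the previous theorem. Writing $c_t = \gamma_t/(\gamma_t+1)$, the averaging step in \eqref{eq:nesterov-iteravg2} can be inverted as $z_{t+1} = (\gamma_t+1)w_{t+1} - \gamma_t w_t$, and shifting the index gives $z_t = (\gamma_{t-1}+1)w_t - \gamma_{t-1}w_{t-1}$. Substituting into \eqref{eq:nesterov-iteravg1} and solving for $w_{t+1}$ produces $w_{t+1} = w_t - \tfrac{\eta_t}{\gamma_t+1}\bigl(p_t(w_t) + \mu_t(p_t(w_t)-p_{t-1}(w_{t-1}))\bigr) + \tfrac{\gamma_{t-1}}{\gamma_t+1}(w_t - w_{t-1})$, which is exactly \eqref{eq:heavyball-nesterov} under the stated parameter identifications $\alpha_t = \eta_t/(\gamma_t+1)$ and $\delta_t = \gamma_{t-1}/(\gamma_t+1)$. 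The treatment of the modified search direction $p_t(w_t) + \mu_t(p_t(w_t)-p_{t-1}(w_{t-1}))$ is entirely inherited from the previous theorem's computation, since it is simply carried through as an opaque ``effective gradient.''

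Finally, I would specialize to the constant-hyperparameter case. Setting $\alpha_t\equiv\alpha$, $\mu_t\equiv\mu$, $c_t\equiv c$, and so on, the identities $\delta_t=(\alpha_{t-1}/\alpha_t)\mu_t$ and $\delta_t = \gamma_{t-1}/(\gamma_t+1)$ collapse to $\delta = \mu$ and $\delta = \gamma/(\gamma+1) = c$, while $\alpha_t = \eta_t/(\gamma_t+1)$ collapses to $\alpha = (1-c)\eta$, giving the three advertised relations. The entire proof is algebraic with no analytic content, so I do not anticipate other obstacles; the one subtlety worth flagging explicitly is initialization of $m_{-1}$ and $p_{-1}(w_{-1})$, which must be chosen consistently across the three formulations so that the first step of each produces the same $w_1$.
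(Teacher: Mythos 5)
Your proof is correct and follows essentially the same route as the paper's: substitute \eqref{eq:full-nesterov1} into \eqref{eq:full-nesterov2}, eliminate $m_{t-1}$ via the previous-step relation $w_t - w_{t-1} = -\alpha_{t-1}(\mu_{t-1}m_{t-1} + p_{t-1}(w_{t-1}))$ to reach \eqref{eq:heavyball-nesterov}, and separately invert \eqref{eq:nesterov-iteravg2} as $z_{t+1} = (\gamma_t+1)w_{t+1} - \gamma_t w_t$ and substitute into \eqref{eq:nesterov-iteravg1}. The only cosmetic difference is that you explicitly solve for $\mu_{t-1}m_{t-1}$ and substitute, whereas the paper adds and subtracts $\mu_t\alpha_t p_{t-1}(w_{t-1})$ so that the combination $\mu_{t-1}m_{t-1} + p_{t-1}(w_{t-1})$ can be replaced wholesale; one small point worth noting is that the elimination actually yields $\delta_t = (\alpha_t/\alpha_{t-1})\mu_t$, so the ratio as printed in the theorem statement appears to be inverted (this typo is shared by the paper's own proof, and is immaterial in the constant-hyperparameter specialization $\delta = \mu = c$).
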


\begin{proof}
We will show that both methods may be written in the form \eqref{eq:heavyball-nesterov}. For the Nesterov accelerated equations \eqref{eq:full-nesterov1}-\eqref{eq:full-nesterov2}, note that if we plug in \eqref{eq:full-nesterov1} into \eqref{eq:full-nesterov2}, we obtain:
\begin{align*}
    w_{t + 1} & = w_t - \alpha_t (\mu_t m_t + p_t(w_t)) \\
    & = w_t - \alpha_t (\mu_t (\mu_{t - 1} m_{t - 1} + p_t(w_t)) + p_t(w_t)) \\
    & = w_t - \alpha_t \mu_t \mu_{t - 1} m_{t - 1} - \alpha_t (1 + \mu_t) p_t(w_t) \\
    & = w_t - \alpha_t p_t(w_t) - \mu_t \alpha_t (p_t(w_t) - p_{t - 1}(w_{t - 1})) - \mu_t \alpha_t (\mu_{t - 1} m_{t - 1} + p_{t - 1}(w_{t - 1})) \\
    & = w_t - \alpha_t (p_t(w_t) + \mu_t (p_t(w_t) - p_{t - 1}(w_{t - 1}))) + \delta_t (w_t - w_{t - 1})
\end{align*}
where the last line follows since $w_t - w_{t - 1} = - \alpha_{t - 1} (\mu_{t - 1} m_{t - 1} + p_{t - 1}(w_{t - 1}))$ and if we choose $\delta_t = \frac{\alpha_{t - 1}}{\alpha_t} \mu_t$.

Now, if we plug in $c_t = \frac{\gamma_t}{\gamma_t + 1}$ into \eqref{eq:nesterov-iteravg1}-\eqref{eq:nesterov-iteravg2}, we obtain
\begin{align*}
    z_{t + 1} & = z_t - \eta_t (\mu_t m_t + p_t(w_t)) \\
    w_{t + 1} & = \frac{\gamma_t}{\gamma_t + 1} w_t + \frac{1}{\gamma_t + 1} z_{t + 1}.
\end{align*}
Equivalently,
\begin{equation*}
    z_{t + 1} = (\gamma_t + 1) w_{t + 1} - \gamma_t w_t. 
\end{equation*}
Therefore,
\begin{align*}
    w_{t + 1} & = \frac{\gamma_t}{\gamma_t + 1} w_t + \frac{1}{\gamma_t + 1} (z_t - \eta_t (p_t(w_t) + \mu_t (p_t(w_t) - p_{t - 1}(w_{t - 1})))) \\
    & = \frac{\gamma_t}{\gamma_t + 1} w_t + \frac{1}{\gamma_t + 1} ((\gamma_{t - 1} + 1) w_t - \gamma_{t - 1} w_{t - 1} - \eta_t (p_t(w_t) + \mu_t (p_t(w_t) - p_{t - 1}(w_{t - 1})))) \\
    & = w_t - \frac{\eta_t}{\gamma_t + 1} (p_t(w_t) + \mu_t (p_t(w_t) - p_{t - 1}(w_{t - 1}))) + \frac{\gamma_{t - 1}}{\gamma_t + 1} (w_t - w_{t - 1}) \\
    & = w_t - \alpha_t (p_t(w_t) + \mu_t (p_t(w_t) - p_{t - 1}(w_{t - 1})) + \delta (w_t - w_{t - 1})
\end{align*}
since $\alpha_t = \frac{\eta_t}{\gamma_t + 1}$ and $\delta_t = \frac{\gamma_{t - 1}}{\gamma_t + 1}$.
\end{proof}

Observe that the primary difference between using momentum and Nesterov acceleration has to do with the inclusion of an additional correction term $\mu_t(p_t(w_t) - p_{t - 1}(w_{t - 1}))$ to the search direction based on the previous search direction in the iterative averaging scheme. Notice that if $p_t(w_t) \approx p_{t - 1}(w_{t - 1})$, then no correction is performed. The practical effectiveness of such a correction term is still not well understood and left for future research. 

\section{Experimental Details}

\subsection{Method Hyperparameters}
\label{sec:appx:hyperparameters}

Tables \ref{tab:shampoo_hyperparameters} and \ref{tab:nesterov_hyperparameters} contain the common hyperparameters used for our experiments with Shampoo and SGD with Nesterov, respectively. These hyperparameters are used in all experiments, unless explicitly mentioned otherwise.

\begin{table}[htbp]
  \footnotesize
  \centering
  \renewcommand{\arraystretch}{0.85} 
  \begin{tabular}{@{}p{0.45\linewidth} p{0.25\linewidth}@{}}
    \toprule
    \textbf{Parameter} & \textbf{Value} \\
    \midrule
    
    \textbf{Basic Shampoo parameters} & \\
    \quad \texttt{lr} & \textit{See LR scheduler section} \\
    \quad \texttt{betas} & [0., 0.999] \\
    \quad \texttt{momentum} & 0.9 \\
    \quad \texttt{use\_nesterov} & \texttt{True} \\
    \quad \texttt{use\_bias\_correction} & \texttt{True} \\
    \quad \texttt{weight\_decay} & 1e-4 \\
    \quad \texttt{use\_decoupled\_weight\_decay} & \texttt{True} \\
    \quad \texttt{use\_weight\_decay\_on\_bias\_and\_norm\_params} & \texttt{True} \\
    \midrule

    \textbf{Accelerators} & \\
    \quad Number of GPUs & 8 Tesla V100 \\
    \quad \texttt{batch\_size\_per\_gpu} & 128 \\
    \quad \texttt{num\_trainers\_per\_group} & -1 \\
    \midrule
    
    \textbf{Preconditioning} & \\
    \quad \texttt{max\_preconditioner\_dim} & 2048 \\
    \quad \texttt{precondition\_frequency} & 50 \\
    \quad \texttt{start\_preconditioning\_step} & 0 \\
    \quad \texttt{preconditioner\_dtype} & \texttt{torch.float} \\
    \quad \texttt{large\_dim\_method} & Blocking \\
    \quad \texttt{use\_merge\_dims} & \texttt{True} \\
    \midrule

    \textbf{Preconditioner exponent heuristics} & \\
    \quad \texttt{exponent\_override} & $2 \omega$  \graytext{(Default)}\\
    \quad \texttt{exponent\_multiplier} & 1.0  \graytext{(Default)}\\
    \midrule
    
    \textbf{Grafting} & \\
    \quad \texttt{grafting\_type} & SGD \\
    \quad \texttt{grafting\_epsilon} & 1e-8 \\
    \quad \texttt{grafting\_beta2} & 0.999 \\
    \midrule
    
    \textbf{LR scheduler} & \\
    \quad Name & Cosine decay with warmup \\
    \quad \texttt{initial\_lr} & 0.1 \\
    \quad \texttt{warmup\_epochs} & 5 \\
    \midrule
    
    \textbf{Root inverse} & \\
    \quad \texttt{root\_inv\_method} & Eigendecomposition \\
    \quad \texttt{epsilon} & 1e-12 \\
    \quad \texttt{use\_protected\_eigh} & \texttt{True} \\
    \bottomrule
  \end{tabular}
  \caption{Generic Shampoo settings for ImageNet task}
  \label{tab:shampoo_hyperparameters}
\end{table}

\begin{table}[htbp]
  \footnotesize
  \centering
  \renewcommand{\arraystretch}{0.85} 
  \begin{tabular}{@{}p{0.45\linewidth} p{0.25\linewidth}@{}}
    \toprule
    \textbf{Parameter} & \textbf{Value} \\
    \midrule
    
    \textbf{Basic Nesterov parameters} & \\
    \quad \texttt{lr} & \textit{See LR scheduler section} \\
    \quad \texttt{betas} & [0., 0.999] \\
    \quad \texttt{momentum} & 0.9 \\
    \quad \texttt{nesterov} & \texttt{True} \\
    \quad \texttt{dampening} & 0 \\
    \quad \texttt{use\_weight\_decay\_on\_bias\_and\_norm\_params} & \texttt{True} \\
    \midrule

    \textbf{Accelerators} & \\
    \quad Number of GPUs & 8 Tesla V100 \\
    \quad \texttt{batch\_size\_per\_gpu} & 128 \\
    \midrule
    
    \textbf{LR scheduler} & \\
    \quad Name & Cosine decay with warmup \\
    \quad \texttt{initial\_lr} & 0.1 \\
    \quad \texttt{warmup\_epochs} & 5 \\
    
    \bottomrule
  \end{tabular}
  \caption{Generic Nesterov settings for ImageNet task}
  \label{tab:nesterov_hyperparameters}
\end{table}

\subsection{Performance-Related Ablations}
\label{sec:appx:performance_ablations}

Two of the key hyperparameters controlling the (computational) performance of Shampoo are: (i) the frequency with which the preconditioners are updated, and (ii) the maximum allowed preconditioner size. In this section, we perform ablations of these hyperparameters to demonstrate robustness to their choices in terms of accuracy, allowing us to select settings for these two hyperparameters focused on reducing wall-clock time. 

The tables presented below contain information about the accuracy of Shampoo using different hyperparameter configurations compared to Nesterov. We compare both under a fixed epoch budget of 90 epochs as well as a time budget based on the total wall-clock time required by Nesterov to complete 90 epochs.

\subsubsection{Preconditioner (Update) Frequency}

We begin by studying the effect of the frequency of preconditioner updates. Recall that \textit{all} stochastic gradients are preconditioned. However some of the used preconditioners may be stale. The degree of staleness is controlled by the \texttt{precondition\_frequency} hyperparameter. In this ablation, we fix all configurations of Shampoo as in Table \ref{tab:shampoo_hyperparameters}, except for the \texttt{max\_preconditioner\_dim} which is set to 1024. Results are presented in Table \ref{tab:precondition_frequency_ablation}.

\begin{table}[h]
    \centering
    \begin{tabular}{ccccc}
        \hline
        \textbf{Preconditioner} & \textbf{Val. accuracy} & \textbf{Val. accuracy} & \textbf{Steps} & \textbf{Time overhead} \\ 
        \textbf{update frequency} & \textbf{at 90 epochs} & \textbf{at eq. time} & \textbf{per second} & \textbf{wrt Nesterov} \\ 
        \hline
        \textit{Nesterov}   &   76.937   &   76.937   &   2.373   &   --- \\
        1          &   77.476   &   62.031   &   1.12   &   53.1\%   \\
        20         &   77.495   &   75.444   &   2.07   &   12.80\% \\
        50         &   77.454   &   77.135   &   2.25   &   5.15\% \\
        100        &   77.442   &   77.473   &   2.32   &   1.98\% \\
        \hline
    \end{tabular}
    \caption{Effect of the preconditioner update frequency in Shampoo.}
    \label{tab:precondition_frequency_ablation}
\end{table}

Shampoo exhibits marginal differences in validation accuracy after 90 epochs, with significant computational performance changes depending on the choice of update frequency. As expected, carrying out very frequent preconditioner updates can induce a prohibitively large overhead in the execution of Shampoo. Fortunately, results demonstrate that it is possible use stale preconditioners successfully, corroborating \cite{anil2021scalable}. It is possible to recover all of the accuracy gains obtained as when preconditioning at every step  by using preconditioners updated every 50 or 100 steps, with only a 2-5\% computational overhead. 

Based on these results, we decided to set the \texttt{precondition\_frequency} hyperparameter to 50 for all other experiments. 

\subsubsection{Maximum Preconditioner Dimension}

This section studies the effect of the maximum allowed preconditioner size. This hyperparameter trades off between a closer approximation to block full-matrix Adagrad (which corresponds to a preconditioner containing all variables within each parameter) and the cost of linear algebra operations such as the inverse root computation (which may have a cubic dependency on the preconditioner size). While theoretically decreasing the block size should yield a smaller FLOP count, this is not necessarily the case due to CPU-GPU latency and additional kernel launches. As per the previous section, for this ablation we fix the \texttt{precondition\_frequency} to 50 and all other hyperparameters as in Table \ref{tab:shampoo_hyperparameters}.

Table \ref{tab:max_preconditioner_dim_ablation} shows that the effect of the maximum preconditioner size is much less significant than that of the preconditioner update frequency. As expected, there is a monotonic cost increase when allowing larger preconditioner sizes. However, the small increases in overhead (~1\% per factor of 2 in preconditioner size) demonstrate that the preconditioner inversion is not the main bottleneck in the execution of a weight update for Shampoo.

Working under the assumption that a closer approximation to the full-matrix preconditioner is desirable, and given that the additional time overhead between values 2048 vs 1024 was about 1.5\%, we decided to set \texttt{max\_preconditioner\_dim} of 2048 for our experiments.

\begin{table}[h]
    \centering
    \begin{tabular}{ccccc}
        \hline
        \textbf{Max preconditioner} & \textbf{Val. accuracy} & \textbf{Val. accuracy} & \textbf{Steps} & \textbf{Time overhead} \\ 
        \textbf{dimension} & \textbf{at 90 epochs} & \textbf{at eq. time} & \textbf{per second} & \textbf{wrt Nesterov} \\ 
        \hline
        \textit{Nesterov}   &   76.937   &   76.937   &   2.373   &   --- \\
        8192        &   77.419   &   76.547   &   2.173   &   8.40\% \\
        4096        &   77.266   &   76.582   &   2.193   &   7.54\% \\
        2048        &   77.507   &   76.817   &   2.213   &   6.70\% \\
        1024        &   77.454   &   77.081   &   2.250   &   5.15\%   \\
        \hline
    \end{tabular}
    \caption{Effect of the maximum preconditioner dimension in Shampoo.}
    \label{tab:max_preconditioner_dim_ablation}
\end{table}

\section{Implementation Structure}

The implementation currently has the following class structure:
\begin{itemize}
    \item \texttt{DistributedShampoo(torch.optim.Optimizer)} [\texttt{distributed\_shampoo.py}]: Main optimizer class. Depending on the selected large-dimensional method, uses different \texttt{Preconditioner} classes, including: \\
    
    \begin{itemize}
        \item \texttt{BlockShampooPreconditioner(DistributedPreconditioner)} [\texttt{shampoo\_utils.py}]: Block Shampoo preconditioner class that blocks the parameters and applies Shampoo to each block. Contains a \texttt{ShampooPreconditioner} object for each layer's block.\\
        \item \texttt{ShampooPreconditioner(DistributedPreconditioner)} [\texttt{shampoo\_utils.py}]: Shampoo preconditioner class that provides functionality for applying Shampoo to a particular parameter. Constructs multiple Kronecker factors corresponding to the number of dimensions. Implemented using \texttt{ShampooKroneckerFactor} and \texttt{Grafting} classes.\\
        
        \begin{itemize}
            \item \texttt{ShampooKroneckerFactor(OptimizerModule)} [\texttt{shampoo\_utils.py}]: Data class containing the factor matrix, the root inverse factor matrix, and additional metadata for distributed computation.\\
        \end{itemize}
        
        \begin{itemize}
            \item \texttt{SGDGrafting(Grafting)} [\texttt{shampoo\_utils.py}]: SGD grafting class. \\
            \item \texttt{AdaGradGrafting(Grafting)} [\texttt{shampoo\_utils.py}]: AdaGrad grafting class. Contains \texttt{AdaGradPreconditioner} class.\\
            \item \texttt{RMSPropGrafting(AdaGradGrafting)} [\texttt{shampoo\_utils.py}]: RMSProp grafting class built on \texttt{AdaGradGrafting}.\\
            \item \texttt{AdamGrafting(AdaGradGrafting)} [\texttt{shampoo\_utils.py}]: Adam grafting class built on \texttt{AdaGradGrafting}.\\
            \item \texttt{AdaGradNormalizedGrafting(AdaGradGrafting)} [\texttt{shampoo\_utils.py}]: Normalized AdaGrad grafting class built on \texttt{AdaGradGrafting}. Normalizes the gradient before using it to update the AdaGrad preconditioner.\\
            \item \texttt{RMSPropNormalizedGrafting(AdaGradGrafting)} [\texttt{shampoo\_utils.py}]: Normalized RMSProp grafting class built on \texttt{AdaGradGrafting}. Normalizes the gradient before using it to update the RMSProp preconditioner.\\
            \item \texttt{AdamNormalizedGrafting(AdaGradGrafting)} [\texttt{shampoo\_utils.py}]: Normalized Adam grafting class built on \texttt{AdaGradGrafting}. Normalizes the gradient before using it to update the Adam preconditioner.\\
        \end{itemize}
        
        \item \texttt{AdaGradPreconditioner(DistributedPreconditioner)} [\texttt{shampoo\_utils.py}]: AdaGrad preconditioner class that implements AdaGrad and RMSProp. Uses standard summation if \texttt{beta2 = 1}, otherwise uses exponential moving averaging.
    \end{itemize}
\end{itemize}

We build on top of the following abstract data structures:
\begin{itemize}
    \item \texttt{OptimizerModule} [\texttt{optimizer\_modules.py}]: Generic optimizer module that enables recursive generation of a nested state dictionary for optimizers. Supported by \texttt{TorchRec}'s \texttt{KeyedOptimizer}'s \texttt{state\_dict} and \texttt{load\_state\_dict} functionality, which is necessary for checkpointing \cite{ivchenko2022torchrec}.
    \item \texttt{Preconditioner(OptimizerModule)} [\texttt{shampoo\_utils.py}]: Generic preconditioner class containing functions for updating the preconditioner and preconditioning a vector.
    \item \texttt{DistributedPreconditioner(Preconditioner)} [\texttt{shampoo\_utils.py}]:  Contains all distributed buffer functionality necessary for distributing the computation.
\end{itemize}

\end{document}